\documentclass{article}

\usepackage{microtype}
\usepackage{graphicx}
\usepackage{subcaption}
\usepackage{booktabs} 

\usepackage{hyperref}
\usepackage{bm}

\usepackage{algorithmicx}
\usepackage{algpseudocode}

\usepackage[preprint]{icml2026}

\usepackage{amsmath}
\usepackage{amssymb}
\usepackage{mathtools}
\usepackage{amsthm}
\usepackage{microtype}
\usepackage{multicol}
\usepackage{multirow}
\usepackage{makecell}
\usepackage{pifont}
\newcommand{\cmark}{\ding{51}}
\newcommand{\xmark}{\ding{55}}

\usepackage[capitalize,noabbrev]{cleveref}

\theoremstyle{plain}
\newtheorem{theorem}{Theorem}[section]
\newtheorem{proposition}[theorem]{Proposition}
\newtheorem{lemma}[theorem]{Lemma}

\theoremstyle{definition}
\newtheorem{definition}[theorem]{Definition}
\newtheorem{assumption}[theorem]{Assumption}
\theoremstyle{remark}
\newtheorem{remark}[theorem]{Remark}

\usepackage[textsize=tiny]{todonotes}

\icmltitlerunning{Transferable Reinforcement Learning for Safe Sim-to-Real Deployment}

\begin{document}

\twocolumn[
 \icmltitle{Transferable Reinforcement Learning via Probabilistic Latent Embeddings and Dynamic Policy Adaptation for Sim-to-Real Deployment}




  \begin{icmlauthorlist}
    \icmlauthor{Gengyue Han}{sch1,sch2}
    \icmlauthor{Yiheng Feng}{sch1}
  \end{icmlauthorlist}

  \icmlaffiliation{sch1}{Lyles School of Civil and Construction Engineering, Purdue University, West Lafayette, USA}
  
  \icmlaffiliation{sch2}{Elmore Family School of Electrical and Computer Engineering, Purdue University, West Lafayette, USA}

  \icmlcorrespondingauthor{Yiheng Feng}{feng333@purdue.edu}

  \icmlkeywords{Machine Learning, ICML}

  \vskip 0.3in
]



\printAffiliationsAndNotice{}  

\begin{abstract}

Due to limited resources and public safety concerns, deep reinforcement learning (RL) agents for many cyber-physical systems (e.g., autonomous vehicles) are first trained in simulators. However, when deployed in real world environments, they often suffer from performance degradation or safety violations because of the inevitable \textit{Sim2Real} gap. Existing zero-shot approaches, such as robust safe RL and domain randomization, mitigate this issue but typically at the cost of degraded performance or residual safety risks when experiencing unmodeled system dynamics. To address these limitations, we propose a novel reinforcement learning framework that enables safe and efficient policy transfer via probabilistic latent embeddings and dynamic policy adaptation. We consider a family of Constrained Markov Decision Processes (CMDPs) under different environment contexts. By leveraging latent context variable in meta-RL, the proposed framework infers the latent representation of the environment from simulated experiences. Furthermore, it incorporates a distributional RL formulation, which allows risk levels of the deployed policy to be adjusted dynamically, based on the estimation accuracy of the latent context variable. This strategy promotes safety at the early deployment stage and improves efficiency through fast policy adaptation under the \textit{Sim2Real} gap.  
  
\end{abstract}

\section{Introduction}

Reinforcement learning (RL) has emerged as a promising paradigm for decision-making in cyber-physical systems, such as robotics, healthcare, and autonomous driving \cite{tang2025deep, jayaraman2024primer, han2026cooperative}. However, RL training typically requires substantial interactions between the agent and the environment, making direct training on real systems expensive and often unsafe due to exploratory behaviors. As a result, many RL policies are first trained in simulators, where data collection is cheap and safety risks are minimized \cite{ray2019benchmarking}.
However, when deployed in the real world, such policies trained in simulation often suffer from performance degradation and safety risks due to the mismatch between simulation and reality. This mismatch is commonly referred to as the \textit{Sim2Real} gap \cite{da2025survey, aljalbout2025reality}.

Existing works try to address the \textit{Sim2Real} challenge from multiple perspectives. Domain randomization strategies improve robustness by exposing policies to diverse simulated conditions, but they do not provide explicit safety guarantees and may lead to unstable training when the simulated variability is overly aggressive \cite{ tobin2017domain}. Distributionally robust and risk-sensitive RL methods formulate \textit{Sim2Real} transfer as a worst-case optimization problem, enhancing reliability under severe distribution shifts, but at the cost of overly conservative policies and degraded performance \cite{as2025spidr, pinto2017robust}. Domain adaptation and meta-RL approaches seek to reduce the \textit{Sim2Real} gap by aligning representations or inferring latent environment contexts, yet most existing methods assume fixed risk preferences and lack mechanisms to regulate safety–performance trade-offs at the deployment stage \cite{rakelly2019efficient, yu2020meta}. In summary, current approaches either sacrifice efficiency for robustness or fail to ensure safety when encountering unmodeled real-world dynamics.

To overcome these limitations, we propose a unified framework for safe and adaptive \textit{Sim2Real} policy transfer. We formulate a family of constrained Markov decision processes (CMDPs) under varying environment contexts and dynamically adjust the risk level of the deployed policy to balance safety and performance during real-world deployment.

\begin{figure*}[!ht]
  \begin{center}   \centerline{\includegraphics[width=1.9\columnwidth]{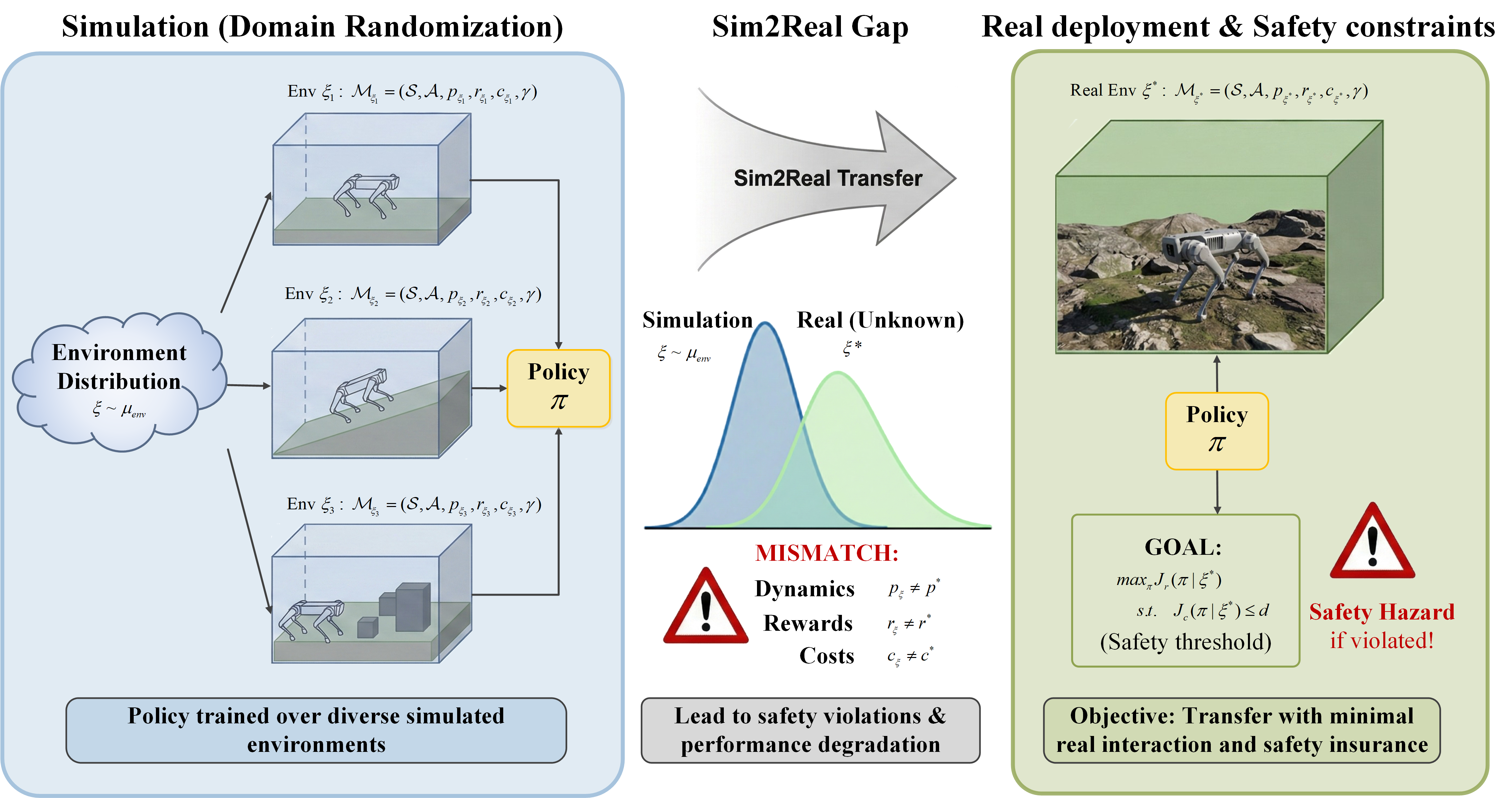}}
    \caption{
      The \textit{Sim2Real} problem statement of RL policy
    }
    \label{F1}
  \end{center}
\end{figure*}

Specifically, the key innovations are summarized as follows:

\begin{itemize}
    \item Unified safe and adaptive \textit{Sim2Real} framework: We present a unified CMDP-based framework that integrates probabilistic latent context variable adaptation with distributional reinforcement learning, enabling safe and adaptive policy transfer under \textit{Sim2Real} mismatch.
    \item Inference time risk regulation: By incorporating distributional RL into the CMDP formulation, our approach allows the risk levels of the deployed policy to be adjusted dynamically at inference time, balancing safety and performance during real-world deployment.   
    \item Theoretical guarantees: We provide theoretical proofs on training convergence, quantify the benefits of latent context variable adaptation, and demonstrate assured safety under the \textit{Sim2Real} gap.
\end{itemize}


\section{Related Work}

\paragraph{Safe RL.}
Safe RL is commonly formulated as a CMDP, where the objective is to maximize expected return while satisfying cumulative cost constraints \cite{altman2021constrained, achiam2017constrained, wachi2024survey, kushwaha2025survey}. Existing CMDP methods broadly fall into Lagrangian-based approaches, which balance reward and cost via dual variables \cite{achiam2017constrained, hong2024primal}, and constraint-satisfying approaches that enforce feasibility during policy learning \cite{mestres2025anytime, wachi2021safe}. Beyond CMDPs, alternative paradigms such as risk-sensitive \cite{prashanth2022risk, wang2024reductions} and robust RL \cite{jaimungal2022robust, meggendorfer2025solving} have also been explored, though they often provide limited flexibility in balancing safety and performance during deployment. Within the CMDP framework, distributional RL \cite{bellemare2017distributional, dabney2018distributional} enables explicit risk level control while preserving safety constraints \cite{yang2023safety}, which motivates this work.

\paragraph{\textit{Sim2Real} transfer.}
A variety of approaches have been proposed to address \textit{Sim2Real} transfer in RL \cite{da2025survey}. Some works focus on domain randomization and curriculum-based strategies, which expose policies to diverse simulated conditions to improve robustness, but lack explicit safety guarantees and may lead to unstable training under aggressive randomization \cite{chen2021understanding, tobin2017domain, danesh2025safe}. Another prominent line of work formulates \textit{Sim2Real} transfer as a distributionally robust learning problem, optimizing policies against worst-case transition dynamics within bounded uncertainty sets, often combined with pessimistic evaluation or ensemble-based uncertainty estimation \cite{liu2024upper, liu2024distributionally, liu2024minimax, as2025spidr}. While these methods improve reliability under severe distribution shifts, their emphasis on worst-case scenarios can result in overly conservative policies and degraded task performance. To overcome these limitations, domain adaptation methods explicitly align simulated and real environments at the representation or distributional level \cite{farhadi2024domain}, reducing mismatch through feature alignment \cite{tzeng2017adversarial, sodhani2021multi}, latent state or dynamics learning \cite{rakelly2019efficient, yu2020meta, guo2025mobody}, representation-mismatch regularization \cite{lyu2024cross}, and adversarial objectives \cite{bousmalis2018using, ganin2016domain}.

\begin{figure*}[ht]
  \begin{center}   \centerline{\includegraphics[width=1.9\columnwidth]{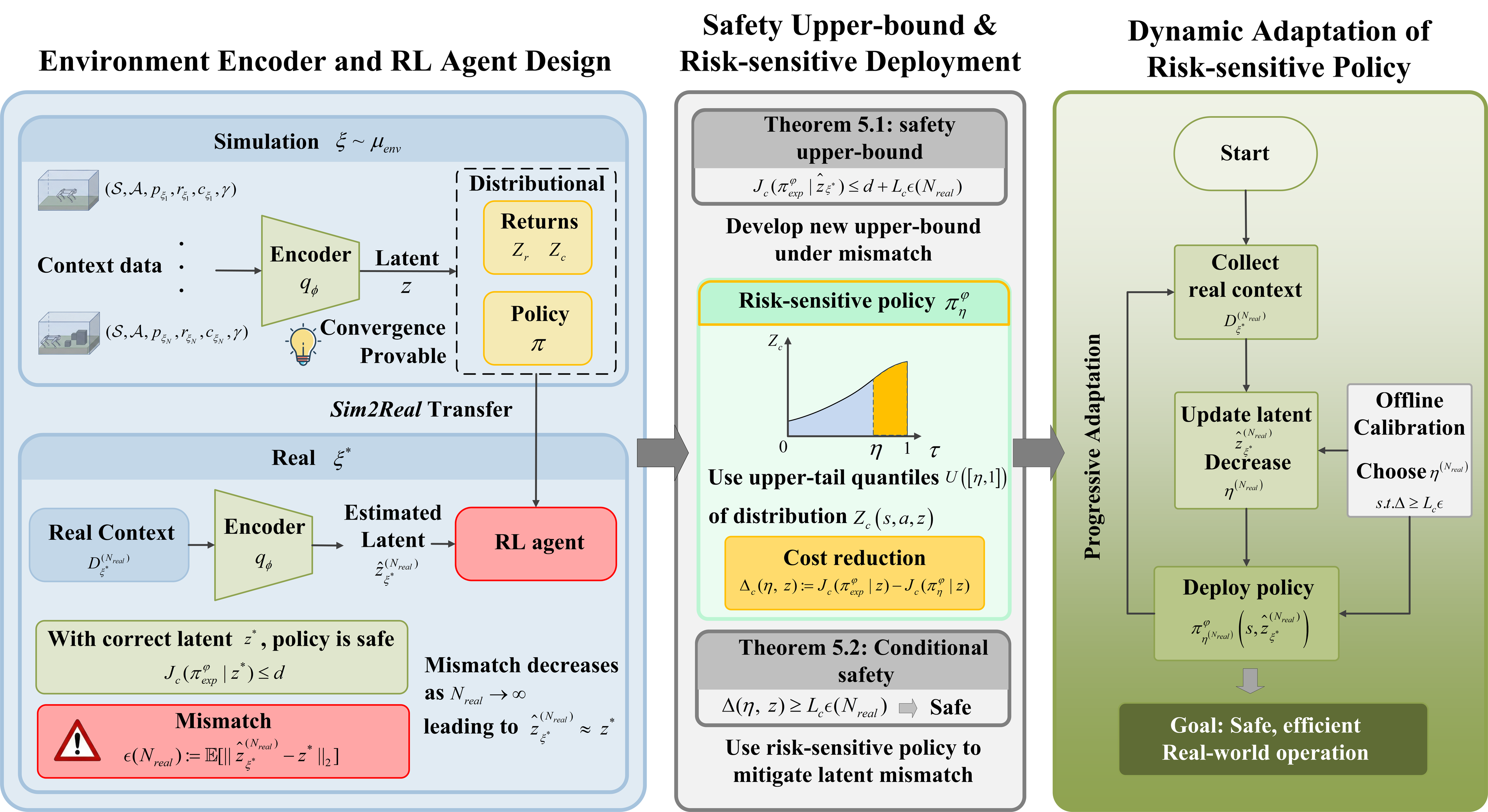}}
    \caption{
      Framework of \textit{Sim2Real} transfer via latent context variable adaptation
    }
    \label{F2}
  \end{center}
\end{figure*}

\paragraph{Meta-RL.}
Meta-learning aims to acquire a learning mechanism that rapidly adapts to new tasks using only a small amount of data \cite{schmidhuber1992learning}. Meta-RL is a specialization of meta-learning in reinforcement learning, where the goal is to quickly adapt to new MDPs or environment dynamics \cite{ finn2017model, duan2016rl, wang2022robust, xu2025efficient}. Prevalent meta-RL methods are categorized into three groups: gradient-based methods, which perform fast adaptation through explicit inner-loop parameter updates (e.g., MAML-TRPO \cite{finn2017model}); memory-based methods, which rely on recurrent architectures to implicitly encode task information in their internal states (e.g., RL$^2$ \cite{duan2016rl}); and latent context variable-based methods, which introduce a task-dependent latent context inferred from experience to condition the policy (e.g., PEARL \cite{rakelly2019efficient}, VariBAD \cite{zintgraf2021varibad}). Within the latent context variable-based category, a prominent line of work adopts a Bayesian perspective, where adaptation is achieved via probabilistic inference over latent task variables rather than online policy optimization \cite{cho2024model, de2025bayesian}. They model task uncertainty through posterior inference, enabling structured exploration and rapid adaptation under partial observability.

\section{Problem Statement and Solution Framework}

We consider a family of CMDPs indexed by an environment parameter $\xi \in \Xi$. Each $\xi$ specifies unique transition dynamics, reward and cost returns of an environment:
\begin{equation}
    \mathcal{M}_{\xi} = (\mathcal{S}, \mathcal{A}, p_{\xi}, r_{\xi}, c_{\xi}, \gamma),
    \label{eq1}
\end{equation}
where $\mathcal{S}$ and $\mathcal{A}$ are state and action spaces. In the environment with $\xi$, $p_{\xi}$ is the state transition function, $r_{\xi}: \mathcal{S} \times \mathcal{A} \rightarrow [0, r_{\xi,\max}] $ denotes the reward function, and $c_{\xi}: \mathcal{S} \times \mathcal{A} \rightarrow [0, c_{\xi,\max}] $ denotes the cost function. $\gamma \in [0,1)$ is the discount factor. As shown in Figure \ref{F1}, in simulation, environments with $\xi \sim \mu_{env}$ are sampled through domain randomization. At deployment, the agent operates in a fixed real environment $\xi^*$, which is unknown in the training process. With control policy $\pi$ obtained from environment $\mathcal{M}_{\xi^*}$, the expected discounted reward is $J_{r}(\pi|\xi) = \mathbb{E} [\sum_{t=0}^{\infty}\gamma^{t}r_{\xi}(\bm{s},\bm{a})]$, and the expected discounted cost is $J_{c}(\pi|\xi) = \mathbb{E} [\sum_{t=0}^{\infty}\gamma^{t}c_{\xi}(\bm{s},\bm{a})]$. Our goal to close the \textit{Sim2Real} gap is equivalent to train an optimal policy $\pi$ that satisfies,
\begin{equation}
    max_{\pi} J_{r}(\pi|\xi^*) \quad s.t. \quad J_{c}(\pi|\xi^*) \leq d,
    \label{eq2}
\end{equation}
where $d$ is a cost threshold. Notably, this optimal policy is trained in simulations with $\xi \sim \mu_{env}$, which is different from the real $\xi^*$. This mismatch leads to potential safety hazard in the real environment $\mathcal{M}_{\xi^*}$. In addition, the cost of online training at the deployment stage is high and only a limited number of experiences can be collected with a reasonable budget. To address these two challenges, this work aims to develop a transferable reinforcement learning agent that is trained entirely in simulation while remaining adaptive and safe at deployment. 
An overview of the \textit{Sim2Real} transfer framework is summarized in Figure \ref{F2}. We introduce an encoder that extracts salient environment-specific information, enabling the learned policy to condition its actions on the underlying environment (Section \ref{sec_Latent_context_design}). In addition, distributional reinforcement learning is employed to characterize the distributions of both rewards and costs under the latent information (Section \ref{sec_Distributional_RL} and \ref{sec:conv_adapt_proofs}), allowing risk level to be adjusted with latent context variable adaptation (Section  \ref{sec_latent}). A safety upper-bound is then developed (Section \ref{sec_upper_bound}) and the agent is proven to be safe under dynamic adaptation of risk-sensitive policy (Section \ref{sec_cost_redu} and \ref{sec_dynamic}). Together, these components enables safe and effective policy transfer with limited real-world interactions under \textit{Sim2Real} gap.

\section{Environment Encoder and RL Agent Design}
Formal proofs of the theorems, propositions, and lemmas in this section are provided in Appendix~\ref{app_proof_all}.

\subsection{Latent Context Design}
\label{sec_Latent_context_design}
A latent context variable $\bm{z}$ is introduced to encode salient information of the environment $\mathcal{M}_{\xi^*}$. Inspired by \cite{rakelly2019efficient}, an inference network encoder $q_{\phi}(\bm{z}|\bm{D}_{\xi})$ is trained to estimate $\bm{z}$, where $\bm{D}_{\xi}=\{(\bm{s}_{i}, \bm{a}_{i}, \bm{s}'_{i}, r_{i}, c_{i})\}_{i=1}^{N}$ is a context set with ${N}$ transition samples. The encoder can be optimized in a model-free manner to approximate the state-action value functions in CMDPs. Both the reward and cost likelihood are considered in the variational lower bound. Specifically, with collected context set $\bm{D}_{\xi}$, a permutation-invariant representation for $q_{\phi}(\bm{z}|\bm{D}_{\xi})$ is modeled as a product of independent factors,
\begin{equation}
    q_{\phi}(\bm{z}|\bm{D}_{\xi}) \propto \prod_{i=1}^N \Psi_{\phi}(\bm{z}|\bm{D}_{\xi,i}),
    \label{eq3}
\end{equation}
where $\bm{D}_{\xi,i}$ represents $i$-th transition in the context set $\bm{D}_{\xi}$. $\Psi_{\phi}(\bm{z}|\bm{D}_{\xi,i}) = \mathcal{N}(f_{\phi}^{\bm{\mu}}(\bm{D}_{\xi,i}), f_{\phi}^{\bm{\sigma}}(\bm{D}_{\xi,i}))$ is a Gaussian factor. The function $f_{\phi}$ is a neural network parameterized by $\phi$, which estimate the mean vector $\bm{\mu}$ and diagonal variance $\bm{\sigma}$ for $\bm{D}_{\xi,i}$. Notably, the product of independent Gaussian factors $\Psi_{\phi}(\bm{z}|\bm{D}_{\xi,i})$ also result in a Gaussian distribution (i.e., the encoder $q_{\phi}(\bm{z}|\bm{D}_{\xi})$). The optimization of $\phi$ depends on the gradients of both the reward and cost critics, and an information bottleneck as shown below,
\begin{equation}
    \mathcal{L}^{\text{encoder}} (\phi) = \beta_{r}\mathcal{L}_{r}^{\text{critic}} + \beta_{c}\mathcal{L}_{c}^{\text{critic}} + 
    \beta_{KL} \mathcal{L}_{KL},
    \label{eq4}
\end{equation}
where $\beta_{r}$, $\beta_{c}$, and $\beta_{KL}$ are constant parameters. $\mathcal{L}_{r}^{\text{critic}}$ and $\mathcal{L}_{c}^{\text{critic}} $ are reward and cost critic losses, respectively. $\mathcal{L}_{KL}$ is the information bottleneck derived by KL-divergence between the encoder and a Gaussian prior $p(\bm{z}) = \mathcal{N}(\mathbf{0}_{d_z}, I_{d_z})$:
\begin{equation}
    \mathcal{L}_{KL} = \text{KL}(q_{\phi}(\bm{z}|\bm{D}_{\xi}) || p(\bm{z})),
    \label{eq_kl}
\end{equation}
where $d_z$ is the dimension of the latent context. The bottleneck constrains $\bm{z}$ to contain only information from the context that is necessary to adapt to the task at hand, mitigating overfitting to training tasks. 

By minimizing $\mathcal{L}^{\text{encoder}} (\phi)$, the encoder $q_{\phi}(\bm{z}|\bm{D}_{\xi})$ could accurately estimate the latent context variable $\bm{z}$ given a collected context set $\bm{D}_{\xi}$, which leads to $J_{r}(\pi|\xi) \approx J_{r}(\pi|z)$ and $J_{c}(\pi|\xi) \approx J_{c}(\pi|\bm{z})$. For an environment with $\xi$, the latent context estimation with $N$ transition samples is,
\begin{equation}
\label{eq_z}
    \hat{\bm{z}}_{\xi}^{(N)}
    = \mathbb{E}_{q_{\phi}(\bm{z}|\bm{D}_{\xi}^{(N)})}[\bm{z}] \in \mathbb{R}^{d_{z}}.
\end{equation}

\subsection{Distributional RL Formulation}
\label{sec_Distributional_RL}
In this work, implicit quantile network (IQN) \cite{dabney2018implicit} is applied to optimize the CMDP, from which the distribution over returns could be modeled. Let $F_{Z}^{-1}(\tau)$, $\tau \in [0,1]$ be the quantile function for the random variable $Z$, i.e. $Z_{\tau}:=F_{Z}^{-1}(\tau)$. The expectation of $Z_{\tau}$ is given by $Q(\bm{s},\bm{a}) := \mathbb{E}_{\tau \sim U ([0,1])}[Z(\bm{s},\bm{a}; \tau)]$. Sample $\tau_i$ and $\tau_j$ from two independent distributions, $\tau_i, \tau_j \sim U ([0,1])$. The quantile TD-errors for the reward and cost functions are derived as follows,
\begin{equation}
    \delta_{ij}^{r} = r + \gamma Z_{\tau_j}^{\theta_{r}^{-}}(\bm{s}', \pi^{\varphi}(\bm{s}', \bm{z}), \bm{z}) - Z_{\tau_i}^{\theta_{r}}(\bm{s}, a, \bm{z}),
    \label{eq5}
\end{equation}
\begin{equation}
    \delta_{ij}^{c} = c + \gamma Z_{\tau_j}^{\theta_{c}^{-}}(\bm{s}', \pi^{\varphi}(\bm{s}', \bm{z}), \bm{z}) - Z_{\tau_i}^{\theta_{c}}(\bm{s}, a, \bm{z}).
    \label{eq6}
\end{equation}

Then the Huber quantile regression loss \cite{huber1992robust} can be calculated as:
\begin{equation}
    \mathcal{L}_{r}^{\text{critic}}(\theta_{r}) = \frac{1}{NN'} \sum_{i=1}^{N} \sum_{j=1}^{N'} \rho_{\tau_{i}}(\delta_{ij}^{r}),
    \label{eq7}
\end{equation}
\begin{equation}
    \mathcal{L}_{c}^{\text{critic}}(\theta_{c}) = \frac{1}{NN'} \sum_{i=1}^{N} \sum_{j=1}^{N'} \rho_{\tau_{i}}(\delta_{ij}^{c}),
    \label{eq8}
\end{equation}
with
\begin{equation}
    \rho_{\tau_{i}}(\delta_{ij}) = |\tau_{i} - \mathbb{I}(\delta_{ij}<0)| \cdot \frac{l(\delta_{ij})}{\kappa},
    \label{eq9}
\end{equation}
\begin{equation}
\label{eq10}
l(\delta_{ij}) =  
\left\{
\begin{aligned}
   &\frac{1}{2} \delta_{ij}^{2},  && \text{if} \quad |\delta_{ij}|<\kappa, \\
   &\kappa(|\delta_{ij}|-\frac{1}{2} \kappa), && \text{else}.
\end{aligned}
\right.,
\end{equation}
where $N$ and $N'$ are number of i.i.d. samples of $\tau_i$ and $\tau_j$, respectively. $\kappa$ is a threshold that controls the transition point between L2 behavior for small TD errors and L1 behavior for large TD errors, improving both stability and robustness in quantile regression.

To enable continuous action making, a deterministic actor $\pi^{\varphi}(\bm{s}, \bm{z})$ is trained with a risk-neutral objective based on the expectation of IQN critics over $\tau \sim U ([0,1])$. Specifically, sampling $\tau_{r}$ and $\tau_{c}$ randomly for $K$ times from $U ([0,1])$, the actor loss is,
\begin{equation}
\begin{aligned}
    \mathcal{L}^{\text{actor}}(\varphi)
    &=-\frac{1}{K} \mathbb{E}_{(\bm{s}, \bm{z})}   
     [\sum_{k=1}^{K} Z_{\tau_r}^{\theta_{r}}(\bm{s}, \pi^{\varphi}(\bm{s}, \bm{z}), \bm{z})\\
     &- \lambda_{L} \sum_{k=1}^{K} Z_{\tau_c}^{\theta_{c}}(\bm{s}, \pi^{\varphi}(\bm{s}, \bm{z}), \bm{z}) ],
     \end{aligned}
    \label{eq11}
\end{equation}
where $\lambda_{L}$ is a Lagrangian multiplier \cite{stooke2020responsive}. Since $\tau \sim U ([0,1])$, the learned policy is risk-neutral, which could be denoted as $\pi_{exp}^{\varphi}(\bm{s}, \bm{z})$.

\subsection{Convergence and Adaptation with Latent Context}
\label{sec:conv_adapt_proofs}

\subsubsection{Training convergence}

We prove that conditioning on $\bm{z}$ preserves the contraction property of distributional Bellman operators, hence preserving the standard convergence property in IQN.

Define the distributional Bellman operator acting on return distributions as:
\begin{equation}
\label{eq:dist_bellman_Tz}
(\mathcal{T}_{r}^{\pi,\bm{z}} Z)(\bm{s},\bm{a})
\ \stackrel{D}{=}\ 
r(\bm{s},\bm{a},\bm{z}) + \gamma\, Z(\bm{s}',\pi(\bm{s}',\bm{z}),\bm{z}),
\end{equation}
and analogously define $\mathcal{T}_{c}^{\pi,\bm{z}}$ for costs by replacing
$r$ with $c$.
Let $W_p(\cdot,\cdot)$ denote the $p$-Wasserstein distance on distributions.

\begin{lemma}[Latent-conditioned contraction]
\label{lem:dist_contraction_z}
Fix any $\bm{z}$ and any policy $\pi(\cdot,\bm{z})$.
Define $\bar d_p(Z_1,Z_2):=\sup_{\bm{s},\bm{a}} W_p\!\left(Z_1(\bm{s},\bm{a},\bm{z}),Z_2(\bm{s},\bm{a},\bm{z})\right)$.
Then the latent-conditioned distributional Bellman operator is a $\gamma$-contraction:
\begin{equation}
\label{eq:contraction_z}
\bar d_p\!\left(\mathcal{T}_{r}^{\pi,\bm{z}} Z_1,\ \mathcal{T}_{r}^{\pi,\bm{z}} Z_2\right)
\le \gamma\, \bar d_p(Z_1,Z_2),
\end{equation}
and the same holds for $\mathcal{T}_{c}^{\pi,\bm{z}}$.
\end{lemma}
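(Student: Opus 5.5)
The plan is to reduce the statement to the classical contraction result of Bellemare et al.\ (2017) for the distributional Bellman evaluation operator. The reduction works because, once $\bm{z}$ is fixed, the latent-conditioned CMDP is an ordinary MDP. Concretely, for fixed $\bm{z}$ I will define the deterministic policy $\pi_{\bm{z}}(\bm{s}) := \pi(\bm{s},\bm{z})$ and the reward $r_{\bm{z}}(\bm{s},\bm{a}) := r(\bm{s},\bm{a},\bm{z})$. Transitions $\bm{s}'\sim p(\cdot\mid\bm{s},\bm{a})$ do not depend on the operator's argument $Z$, and I will treat them as fixed given $(\bm{s},\bm{a})$ and the environment underlying $\bm{z}$. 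Then $\mathcal{T}_{r}^{\pi,\bm{z}}$ is exactly the standard operator $\mathcal{T}^{\pi_{\bm{z}}}$ acting on $Z(\cdot,\cdot,\bm{z})$, and $\bar d_p$ is the maximal Wasserstein metric. Since the paper does not cite an explicit earlier result, I will still write out the argument directly rather than only citing it.

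\textbf{Step 1: coupling.} Fix $(\bm{s},\bm{a})$. Let $R=r(\bm{s},\bm{a},\bm{z})$ (possibly random) and $\bm{s}'\sim p(\cdot\mid\bm{s},\bm{a})$, with $\bm{a}'=\pi(\bm{s}',\bm{z})$. For each $(\bm{s}',\bm{a}')$ I choose an optimal coupling $(U_1,U_2)$ of $Z_1(\bm{s}',\bm{a}',\bm{z})$ and $Z_2(\bm{s}',\bm{a}',\bm{z})$, which exists on Polish spaces. I make the coupling independent of $R$ given $\bm{s}'$, and share $R$ and $\bm{s}'$ across both sides. Then $R+\gamma U_k$ has law $(\mathcal{T}_{r}^{\pi,\bm{z}}Z_k)(\bm{s},\bm{a})$ for $k=1,2$.

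\textbf{Step 2: bound.} By the definition of $W_p$ as an infimum over couplings,
\[
W_p^p\big((\mathcal{T}Z_1)(\bm{s},\bm{a}),(\mathcal{T}Z_2)(\bm{s},\bm{a})\big)\le \mathbb{E}\,|\gamma(U_1-U_2)|^p=\gamma^p\,\mathbb{E}_{\bm{s}'}\big[W_p^p(Z_1(\bm{s}',\bm{a}',\bm{z}),Z_2(\bm{s}',\bm{a}',\bm{z}))\big].
\]
The right-hand side is at most $\gamma^p \bar d_p(Z_1,Z_2)^p$. Taking $p$-th roots and then the supremum over $(\bm{s},\bm{a})$ gives \eqref{eq:contraction_z}. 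The properties of $W_p$ used here are translation invariance ($W_p(A+R,B+R)\le W_p(A,B)$ under a shared independent shift), homogeneity ($W_p(\gamma A,\gamma B)=\gamma W_p(A,B)$), and convexity under mixing over $\bm{s}'$. For $p=\infty$ I will replace the expectation by an essential supremum. The cost operator $\mathcal{T}_{c}^{\pi,\bm{z}}$ is handled identically by replacing $r$ with $c$; nothing in the argument uses the form of the reward.

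\textbf{Main obstacle.} The delicate step is the measurable selection of optimal couplings indexed by $\bm{s}'$, together with finiteness of $\bar d_p$. Finiteness follows from bounded rewards $r\in[0,r_{\max}]$: I will restrict to return distributions supported in $[0,r_{\max}/(1-\gamma)]$, a set preserved by $\mathcal{T}$. For the selection, I will first handle finite or countable $\mathcal{S}$ directly, and otherwise invoke the measurable-selection theorem for optimal transport plans (Villani, Cor.\ 5.22). Alternatively, I can avoid selection entirely: use an $\varepsilon$-optimal coupling together with the gluing lemma, then let $\varepsilon\to0$. Finally, it should be noted that $\bm{z}$ enters only as a fixed index, so the contraction holds uniformly in $\bm{z}$.
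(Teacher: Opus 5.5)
Your proposal is correct and follows essentially the paper's own argument: condition on $\bm{s}'$, take an optimal coupling of $Z_1(\bm{s}',\pi(\bm{s}',\bm{z}),\bm{z})$ and $Z_2(\bm{s}',\pi(\bm{s}',\bm{z}),\bm{z})$ with the reward shared so that it cancels, bound $W_p^p$ of the Bellman targets by $\gamma^p\,\mathbb{E}_{\bm{s}'}[W_p^p(\cdot,\cdot)]\le\gamma^p\bar d_p^p(Z_1,Z_2)$, then take $p$-th roots and the supremum over $(\bm{s},\bm{a})$. Your extra points (measurable selection of the conditional couplings, the $p=\infty$ case, random rewards, and finiteness of $\bar d_p$) cover details the paper's proof passes over silently, without changing the route.
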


\begin{theorem}[Adding latent context $\bm{z}$ preserves Bellman contraction and fixed-point well-posedness]
\label{thm:contraction_preserved_z}
For any fixed latent $\bm{z}$ and any fixed policy $\pi(\cdot,\bm{z})$,
the latent-conditioned distributional Bellman operators
$\mathcal{T}_{r}^{\pi,\bm{z}}$ and $\mathcal{T}_{c}^{\pi,\bm{z}}$
are $\gamma$-contractions under $\bar d_p$.
Consequently, each operator admits a unique fixed point
$Z_{r}^{\pi,\bm{z}}$ and $Z_{c}^{\pi,\bm{z}}$ satisfying
\begin{equation}
Z_{r}^{\pi,\bm{z}}=\mathcal{T}_{r}^{\pi,\bm{z}} Z_{r}^{\pi,\bm{z}},
\qquad
Z_{c}^{\pi,\bm{z}}=\mathcal{T}_{c}^{\pi,\bm{z}} Z_{c}^{\pi,\bm{z}}.
\end{equation}
Moreover, for any initial return distributions $Z_{r,0}(\cdot,\cdot,\bm{z})$ and $Z_{c,0}(\cdot,\cdot,\bm{z})$,
the iterates
\begin{equation}
Z_{r,k+1}:=\mathcal{T}_{r}^{\pi,\bm{z}} Z_{r,k},
\qquad
Z_{c,k+1}:=\mathcal{T}_{c}^{\pi,\bm{z}} Z_{c,k},
\end{equation}
converge geometrically:
\begin{equation}
\begin{aligned}
\bar d_p\!\left(Z_{r,k}, Z_{r}^{\pi,\bm{z}}\right)
&\le \gamma^k \, \bar d_p\!\left(Z_{r,0}, Z_{r}^{\pi,\bm{z}}\right), \\
\bar d_p\!\left(Z_{c,k}, Z_{c}^{\pi,\bm{z}}\right)
&\le \gamma^k \, \bar d_p\!\left(Z_{c,0}, Z_{c}^{\pi,\bm{z}}\right).
\end{aligned}
\end{equation}

\end{theorem}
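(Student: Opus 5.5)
The plan is to reduce the theorem to Lemma~\ref{lem:dist_contraction_z} plus the Banach fixed-point theorem, applied separately for each fixed $\bm{z}$. The contraction claim itself is exactly Lemma~\ref{lem:dist_contraction_z}, so only existence, uniqueness, and the geometric rate remain.

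\textbf{The metric space.} Fix $\bm{z}$ and the policy $\pi(\cdot,\bm{z})$. Let $\mathcal{Z}_p$ be the set of return-distribution functions $(\bm{s},\bm{a})\mapsto Z(\bm{s},\bm{a},\bm{z})$ whose $p$-th moments are uniformly bounded over $(\bm{s},\bm{a})$.
\begin{itemize}
\item Since $r\in[0,r_{\xi,\max}]$ and $c\in[0,c_{\xi,\max}]$, every distribution produced by the operators is supported in $[0,r_{\max}/(1-\gamma)]$ (respectively the cost analogue). It is therefore enough to work on distributions with support in a fixed compact interval $I$.
\item $W_p$ is a complete metric on $\mathcal{P}(I)$, since it metrizes weak convergence on a compact set and $\mathcal{P}(I)$ is weakly compact.
\item Hence $\bar d_p$, a sup over $(\bm{s},\bm{a})$ of complete bounded metrics, is a complete metric on the product space $\mathcal{P}(I)^{\mathcal{S}\times\mathcal{A}}$. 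Completeness follows from pointwise Cauchy limits plus uniformity of the sup.
\item We must also check that $\mathcal{T}_r^{\pi,\bm{z}}$ maps this space into itself. If $Z$ has support in $I=[0,R]$ with $R=r_{\max}/(1-\gamma)$, then $r+\gamma Z$ has support in $[0,r_{\max}+\gamma R]=[0,R]$.
\end{itemize}

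\textbf{Fixed point and convergence.} By Lemma~\ref{lem:dist_contraction_z}, $\mathcal{T}_r^{\pi,\bm{z}}$ is a $\gamma$-contraction with $\gamma<1$ on this complete space. Banach's theorem then gives a unique fixed point $Z_r^{\pi,\bm{z}}$, and the same argument gives $Z_c^{\pi,\bm{z}}$. The geometric rate follows by induction using the fixed-point identity:
\[
\bar d_p(Z_{r,k+1},Z_r^{\pi,\bm{z}})=\bar d_p(\mathcal{T}_r^{\pi,\bm{z}}Z_{r,k},\mathcal{T}_r^{\pi,\bm{z}}Z_r^{\pi,\bm{z}})\le\gamma\,\bar d_p(Z_{r,k},Z_r^{\pi,\bm{z}}).
\]
The cost case is identical.

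\textbf{Main obstacle: arbitrary initializations.} An initial $Z_{r,0}$ need not be supported in $I$. I would handle this by working on the larger space of distributions with uniformly bounded $p$-th moments, where $\bar d_p$ may be finite but the space is still complete; this is standard (Bellemare et al.). The inequality for $k\ge1$ needs only the contraction and the fixed-point identity, not completeness. If $\bar d_p(Z_{r,0},Z_r^{\pi,\bm{z}})=\infty$, the bound holds trivially.

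If I cannot cleanly justify completeness of the sup-$W_p$ space, I would fall back on a direct construction. Show that $\{Z_{r,k}\}$ is $\bar d_p$-Cauchy via $\bar d_p(Z_{r,k},Z_{r,k+1})\le\gamma^k\bar d_p(Z_{r,0},Z_{r,1})$. Take pointwise $W_p$ limits, which exist by completeness of $W_p$ on $\mathcal{P}_p(\mathbb{R})$. Then verify that the limit is a fixed point using the continuity of the operator, which is implied by the contraction.
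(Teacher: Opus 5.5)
Your proposal is correct and matches the paper's proof: the paper also gets the $\gamma$-contraction from Lemma~\ref{lem:dist_contraction_z} and then applies the Banach fixed-point theorem to obtain existence, uniqueness, and the $\gamma^k$ rate. The paper only says ``a contraction mapping on a complete metric space'' without identifying the space. Your explicit choice of an invariant complete space fills in that omitted detail: distributions supported in $[0,r_{\max}/(1-\gamma)]$, or alternatively those with uniformly bounded $p$-th moments. It also specifies the class within which the fixed point is unique.
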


The contraction in Theorem~\ref{thm:contraction_preserved_z} characterizes the target for the IQN critics when $\bm{z}$ is fixed.
In practice, $\bm{z}$ is generated by the encoder $q_{\phi}(\bm{z}|\bm{D}_{\xi})$, and $\phi$ is trained jointly with the critics.
We adopt an alternating frozen scheme: when updating the encoder, we freeze the critic parameters, and when updating the critics, the encoder is fixed. Specifically, at iteration $k$, we perform one critic step with $\phi_k$ fixed:
\begin{equation}
\label{eq:crit_update_alt}
\begin{aligned}
\theta_{r,k+1}
&= \theta_{r,k}
   - \alpha_k \, \widehat{\nabla}_{\theta_r}
     \mathcal{L}_{r}^{\text{critic}}(\theta_{r,k}; \phi_k), \\
\theta_{c,k+1}
&= \theta_{c,k}
   - \alpha_k \, \widehat{\nabla}_{\theta_c}
     \mathcal{L}_{c}^{\text{critic}}(\theta_{c,k}; \phi_k).
\end{aligned}
\end{equation}
followed by one encoder step with critics frozen (i.e., $\theta_{r,k+1},\theta_{c,k+1}$ fixed):
\begin{equation}
\label{eq:enc_update_alt}
\phi_{k+1}=\phi_k-\beta_k\, \widehat{\nabla}_{\phi}\mathcal{L}^{\text{encoder}}(\phi_k;\theta_{r,k+1},\theta_{c,k+1}),
\end{equation}
where $\mathcal{L}^{\text{encoder}}$ is defined in \eqref{eq4}. This prevents direct interference on critic parameters and yields a stable block-coordinate stochastic approximation.

For analysis, define a Lyapunov function
\begin{equation}
\label{eq:potential_J}
\begin{aligned}
\mathcal{J}(\theta_r,\theta_c,\phi)
:={}&
\beta_{r}\mathcal{L}_{r}^{\text{critic}}(\theta_r;\phi)
+\beta_{c}\mathcal{L}_{c}^{\text{critic}}(\theta_c;\phi) \\
&+\beta_{KL}\mathcal{L}_{KL}(\phi).
\end{aligned}
\end{equation}
where $\mathcal{L}_{r}^{\text{critic}}$ and $\mathcal{L}_{c}^{\text{critic}}$
are given in \eqref{eq7}--\eqref{eq10}, and the dependence on $\phi$ is through
$\bm{z}\sim q_{\phi}(\bm{z}|\bm{D}_{\xi})$.

\begin{lemma}[Descent lemma (smooth upper bound)]
\label{lem:descent}
Assume $\mathcal{J}$ has $L$-Lipschitz gradients, i.e.,
$\|\nabla\mathcal{J}(u)-\nabla\mathcal{J}(v)\|_2\le L\|u-v\|_2$ for all $u,v$.
Then for any $x$ and any $\Delta$,
\begin{equation}
\label{eq:descent_lemma}
\mathcal{J}(x+\Delta)
\le
\mathcal{J}(x)+\langle \nabla\mathcal{J}(x),\Delta\rangle
+\frac{L}{2}\|\Delta\|_2^2.
\end{equation}
\end{lemma}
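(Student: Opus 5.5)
The plan is the standard fundamental-theorem-of-calculus argument along the segment from $x$ to $x+\Delta$. Fix $x$ and $\Delta$ and define the scalar function $g(t):=\mathcal{J}(x+t\Delta)$ for $t\in[0,1]$. Since $\mathcal{J}$ has a Lipschitz (hence continuous) gradient, $g$ is continuously differentiable with $g'(t)=\langle \nabla\mathcal{J}(x+t\Delta),\Delta\rangle$. This step uses only the chain rule. Here $x=(\theta_r,\theta_c,\phi)$ is the stacked parameter vector on which $\mathcal{J}$ in \eqref{eq:potential_J} is defined.

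Next, write $\mathcal{J}(x+\Delta)-\mathcal{J}(x)=g(1)-g(0)=\int_0^1 g'(t)\,dt$. Adding and subtracting $\langle\nabla\mathcal{J}(x),\Delta\rangle$ gives
\begin{equation*}
\mathcal{J}(x+\Delta)=\mathcal{J}(x)+\langle\nabla\mathcal{J}(x),\Delta\rangle+\int_0^1\langle \nabla\mathcal{J}(x+t\Delta)-\nabla\mathcal{J}(x),\Delta\rangle\,dt .
\end{equation*}
It then remains to bound the remainder integral. By Cauchy--Schwarz and the $L$-Lipschitz assumption, the integrand is at most $\|\nabla\mathcal{J}(x+t\Delta)-\nabla\mathcal{J}(x)\|_2\|\Delta\|_2\le Lt\|\Delta\|_2^2$. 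Integrating $Lt$ over $[0,1]$ gives $\frac{L}{2}\|\Delta\|_2^2$, which is exactly \eqref{eq:descent_lemma}.

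There is no substantive obstacle here. The only point needing care is justifying differentiability of $g$ and the use of the fundamental theorem of calculus. Both follow from the gradient being Lipschitz, hence continuous, so $g'$ is continuous on $[0,1]$. I will also note that the hypothesis is used jointly over all blocks $(\theta_r,\theta_c,\phi)$, with the Euclidean norm on the concatenated vector. This is what allows the lemma to be applied later to the block-coordinate updates \eqref{eq:crit_update_alt}--\eqref{eq:enc_update_alt}: take $\Delta$ supported on one block at a time.
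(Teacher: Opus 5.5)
Your proof is correct and follows the paper's argument exactly: parametrize the segment by $g(t)=\mathcal{J}(x+t\Delta)$, apply the fundamental theorem of calculus, add and subtract $\langle\nabla\mathcal{J}(x),\Delta\rangle$, and bound the remainder by Cauchy--Schwarz and the Lipschitz gradient to obtain $\int_0^1 Lt\|\Delta\|_2^2\,dt=\frac{L}{2}\|\Delta\|_2^2$. Your added justification that $g$ is continuously differentiable, because a Lipschitz gradient is continuous, supplies a detail the paper leaves implicit.
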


Based on Lemma~\ref{lem:descent}, the following Theorem that demonstrates the stability of the alternating frozen scheme for joint training is stated.

\begin{theorem}[Alternating frozen training is stable under constant step sizes]
\label{thm:opt_stationary_const}
Under the alternating frozen updating scheme with constant step sizes
$\alpha_k\equiv \alpha>0$ and $\beta_k\equiv \beta>0$, let $g_k:=\nabla \mathcal{J}(\theta_{r,k},\theta_{c,k},\phi_k)$. Then,
\begin{equation}
\label{eq:avg_grad_limsup_Omax}
\limsup_{K\to\infty}
\frac{1}{K}
\sum_{k=0}^{K-1}
\mathbb{E}\!\left[\|g_k\|_2^2\right]
\le
O\!\left(\max\{\alpha,\beta\}\right).
\end{equation}
Equivalently, the iterates approach an $O(\max\{\alpha,\beta\})$-stationary neighborhood
of $\mathcal{J}$, and the joint training is stable (does not diverge).
\end{theorem}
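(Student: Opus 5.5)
We treat the alternating frozen scheme as block-coordinate stochastic gradient descent on the Lyapunov function $\mathcal{J}$ in \eqref{eq:potential_J}. Beyond the $L$-smoothness in Lemma~\ref{lem:descent}, we adopt the standard stochastic-approximation assumptions that the statement leaves implicit. First, the stochastic gradients are unbiased: $\mathbb{E}[\widehat{\nabla}_{\theta}\mathcal{L}^{\text{critic}}\mid\mathcal{F}_k]=\nabla_{\theta}\mathcal{L}^{\text{critic}}$, and likewise for the encoder. Second, their second moments are bounded by $\sigma^2$. Third, $\mathcal{J}$ is bounded below by $\mathcal{J}^{\inf}\ge 0$, which holds because every term in \eqref{eq:potential_J} is a loss or a KL divergence. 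We also need the critic steps to be descent directions for $\mathcal{J}$. This holds because $\mathcal{L}_{KL}$ does not depend on $\theta$, so $\nabla_{\theta_r}\mathcal{J}=\beta_r\nabla_{\theta_r}\mathcal{L}^{\text{critic}}_r$. In the same way, the encoder gradient of \eqref{eq4} coincides with $\nabla_\phi\mathcal{J}$ evaluated at the frozen critics. If the critic steps are not rescaled by $\beta_r,\beta_c$, this only changes constants.

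\textbf{Step 1 (critic half-step).} Apply Lemma~\ref{lem:descent} with $x=(\theta_{r,k},\theta_{c,k},\phi_k)$ and $\Delta=(-\alpha\widehat{\nabla}_{\theta_r},-\alpha\widehat{\nabla}_{\theta_c},0)$, then take conditional expectations. This gives
\[
\mathbb{E}[\mathcal{J}_{k+1/2}]\le \mathcal{J}_k - c_1\alpha\|\nabla_\theta\mathcal{J}_k\|^2 + \tfrac{L}{2}\alpha^2\sigma^2 .
\]

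\textbf{Step 2 (encoder half-step).} Apply Lemma~\ref{lem:descent} again at the intermediate point $(\theta_{k+1},\phi_k)$ with $\Delta=(0,0,-\beta\widehat{\nabla}_\phi)$. This gives
\[
\mathbb{E}[\mathcal{J}_{k+1}]\le \mathcal{J}_{k+1/2}-c_2\beta\|\nabla_\phi\mathcal{J}(\theta_{k+1},\phi_k)\|^2+\tfrac{L}{2}\beta^2\sigma^2 .
\]

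\textbf{Step 3 (relating the encoder gradient back to $g_k$).} The main obstacle is that the encoder gradient is evaluated at the updated critics rather than at $x_k$. We use $L$-Lipschitzness of $\nabla\mathcal{J}$:
\[
\|\nabla_\phi\mathcal{J}(\theta_{k+1},\phi_k)-\nabla_\phi\mathcal{J}_k\|\le L\alpha\|\widehat{\nabla}_\theta\|,
\]
together with $\|a\|^2\ge \tfrac12\|b\|^2-\|a-b\|^2$. This yields $-\beta\|\nabla_\phi\mathcal{J}(\theta_{k+1},\phi_k)\|^2\le -\tfrac{\beta}{2}\|\nabla_\phi\mathcal{J}_k\|^2+\beta L^2\alpha^2\sigma^2$ in expectation. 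The extra term is $O(\beta\alpha^2)$ and is absorbed into the second-order terms.

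\textbf{Step 4 (combining and telescoping).} Adding the two half-steps and setting $\eta=\min\{\alpha,\beta\}$, we get
\[
\mathbb{E}[\mathcal{J}_{k+1}]\le\mathbb{E}[\mathcal{J}_k]-c\,\eta\,\mathbb{E}\|g_k\|^2 + C(\alpha^2+\beta^2+\beta\alpha^2)\sigma^2 .
\]
Summing over $k=0,\dots,K-1$, using $\mathcal{J}\ge\mathcal{J}^{\inf}$, and dividing by $cK\eta$ gives
\[
\frac1K\sum_{k=0}^{K-1}\mathbb{E}\|g_k\|^2\le \frac{\mathcal{J}_0-\mathcal{J}^{\inf}}{c\eta K}+\frac{C'\max\{\alpha,\beta\}^2}{\eta}.
\]
As $K\to\infty$ the first term vanishes. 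The second term is $O(\max\{\alpha,\beta\})$ when $\alpha$ and $\beta$ are of comparable order, meaning their ratio is bounded, which we state as the implicit assumption. Without that assumption the bound degrades to $O(\max\{\alpha,\beta\}^2/\min\{\alpha,\beta\})$, and the $O$ hides the ratio.

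A secondary subtlety is the claim that $\mathcal{J}$ is smooth even though the Huber quantile loss is only $C^1$. Its gradient is nonetheless Lipschitz, with constant $1/\kappa$ times Lipschitz constants of the networks. The dependence on $\phi$ through $\bm{z}$ is handled by the reparameterization trick, so it is covered by the smoothness hypothesis as stated. Boundedness of $\mathcal{J}$ over the iterates then gives the non-divergence conclusion.
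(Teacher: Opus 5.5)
Your proposal is correct and is essentially the paper's own argument: the descent lemma is applied to the critic half-step and then to the encoder half-step at the intermediate iterate $\tilde{x}_{k+1}=(\theta_{r,k+1},\theta_{c,k+1},\phi_k)$, and the two bounds are summed, telescoped against $\mathcal{J}_{\inf}$ and averaged. Your Step~3 and your bounded-ratio caveat on $\alpha/\beta$ make explicit two points the paper glosses over: its closing remark that $\|\nabla_{\phi}\mathcal{J}(x_k)\|_2^2$ and $\|\nabla_{\phi}\mathcal{J}(\tilde{x}_{k+1})\|_2^2$ differ only by a higher-order term, and its appeal to $\alpha,\beta\le\max\{\alpha,\beta\}$, which in fact requires dividing by $\min\{\alpha,\beta\}$.

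One small caution concerns your closing sentence. The telescoped recursion does not show that $\mathcal{J}$ stays bounded along the iterates; it only gives $\mathbb{E}[\mathcal{J}(x_K)]\le\mathbb{E}[\mathcal{J}(x_0)]+O(K\max\{\alpha,\beta\}^2)$. So ``does not diverge'' should be read, as in the paper, as the averaged-gradient bound itself, not as a separately derived boundedness property.
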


\subsubsection{Deployment adaptation} 

In the last section, we show that introducing the latent context $\bm{z}$ preserves training stability.
Moreover, at deployment, conditioning the policy on $\bm{z}$ resolves task aliasing and eliminates the irreducible performance gap of non-contextual policies.

Consider a policy that does not use context, i.e., $\pi(\bm{a}|\bm{s})$. Even with unlimited data, such a policy cannot in general achieve per-task
optimal behavior when different environments require different optimal actions in the same state (\emph{task aliasing}).

\begin{proposition}[Irreducible regret under task aliasing without $\bm{z}$]
\label{prop:aliasing_lb_short}
Suppose there exist two environments $\xi_1,\xi_2\in\mathrm{supp}(\mu_{env})$
sharing a common state $\bar{\bm{s}}$ and two actions $\bm{a}_1,\bm{a}_2$ such that
\begin{equation}
\label{eq:aliasing_condition_short}
\begin{aligned}
Q_r^{\xi_1}\!\left(\bar{\bm{s}}, \bm{a}_1\right)
- Q_r^{\xi_1}\!\left(\bar{\bm{s}}, \bm{a}_2\right)
&\ge \delta, \\
Q_r^{\xi_2}\!\left(\bar{\bm{s}}, \bm{a}_2\right)
- Q_r^{\xi_2}\!\left(\bar{\bm{s}}, \bm{a}_1\right)
&\ge \delta
\end{aligned}
\end{equation}
for some $\delta>0$. Then for any non-contextual policy $\pi(\bm{a}|\bm{s})$, there exists a task
$\xi\in\{\xi_1,\xi_2\}$ such that the (one-step) regret at $\bar{\bm{s}}$ is at least
$\delta/2$. In particular, this gap does not vanish with more data.
\end{proposition}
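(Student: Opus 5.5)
The plan is to define one-step regret at $\bar{\bm{s}}$ for a (possibly stochastic) non-contextual policy in task $\xi$ as
\begin{equation*}
\mathrm{Reg}_{\xi}(\pi) := \max_{\bm{a}} Q_r^{\xi}(\bar{\bm{s}},\bm{a}) - \mathbb{E}_{\bm{a}\sim\pi(\cdot|\bar{\bm{s}})}\big[Q_r^{\xi}(\bar{\bm{s}},\bm{a})\big],
\end{equation*}
with the $Q$-functions in \eqref{eq:aliasing_condition_short} taken as fixed reference values (e.g.\ optimal $Q$-functions of each task). The key observation is that $\pi(\cdot|\bar{\bm{s}})$ is the \emph{same} distribution in both tasks, since the policy cannot see $\xi$. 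The argument is a two-point averaging argument: the average of the two regrets is at least $\delta/2$, so one of them is.

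First I would reduce to the two-action case. If $\pi$ puts mass on actions outside $\{\bm{a}_1,\bm{a}_2\}$, I would lower bound regret in task $\xi_1$ using $\max_{\bm{a}} Q_r^{\xi_1}(\bar{\bm{s}},\bm{a}) \ge Q_r^{\xi_1}(\bar{\bm{s}},\bm{a}_1)$, and similarly in $\xi_2$ using $\bm{a}_2$. The cleanest route is to restrict attention to policies supported on $\{\bm{a}_1,\bm{a}_2\}$, which is the intended setting. Let $p:=\pi(\bm{a}_1|\bar{\bm{s}})$ and $1-p=\pi(\bm{a}_2|\bar{\bm{s}})$. Then
\begin{equation*}
\mathrm{Reg}_{\xi_1}(\pi) \ge (1-p)\big(Q_r^{\xi_1}(\bar{\bm{s}},\bm{a}_1)-Q_r^{\xi_1}(\bar{\bm{s}},\bm{a}_2)\big) \ge (1-p)\delta,
\end{equation*}
and symmetrically $\mathrm{Reg}_{\xi_2}(\pi)\ge p\,\delta$. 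Summing gives $\mathrm{Reg}_{\xi_1}+\mathrm{Reg}_{\xi_2}\ge\delta$, so $\max_{\xi\in\{\xi_1,\xi_2\}}\mathrm{Reg}_{\xi}(\pi)\ge \delta/2$. A deterministic policy is the case $p\in\{0,1\}$, which gives regret $\delta$ in one task.

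The main obstacle is the general-support case, where the regret comparison with $\bm{a}_1$ and $\bm{a}_2$ alone does not directly control mass placed on other actions. I would handle it by using $\max_{\bm{a}} Q_r^{\xi_1}(\bar{\bm{s}},\bm{a}) \ge Q_r^{\xi_1}(\bar{\bm{s}},\bm{a}_1)$ together with an implicit assumption that $\bm{a}_1,\bm{a}_2$ are the relevant optimal actions. Alternatively, I would state the result for the two-action restriction, consistent with the proposition's framing.

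Finally, the phrase ``does not vanish with more data'' follows because the bound holds for every non-contextual $\pi$, including the limit of any learning procedure, and depends only on $\delta$. In contrast, a contextual $\pi(\bm{a}|\bm{s},\bm{z})$ with $\bm{z}$ distinguishing $\xi_1$ from $\xi_2$ can select $\bm{a}_1$ in $\xi_1$ and $\bm{a}_2$ in $\xi_2$, achieving zero regret at $\bar{\bm{s}}$ in both tasks.
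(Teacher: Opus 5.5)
Your proposal is correct and follows the paper's proof essentially line for line: set $p=\pi(\bm{a}_1|\bar{\bm{s}})$, bound the regrets in $\xi_1$ and $\xi_2$ below by $(1-p)\delta$ and $p\delta$, and conclude that $\max\{p\delta,(1-p)\delta\}\ge\delta/2$. The paper also silently restricts $\pi(\cdot|\bar{\bm{s}})$ to $\{\bm{a}_1,\bm{a}_2\}$ by writing $\pi(\bm{a}_2|\bar{\bm{s}})=1-p$, and you are right that this restriction is genuinely needed: a third action that is optimal in both tasks would incur zero regret in each.
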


\begin{remark}
  Proposition~\ref{prop:aliasing_lb_short} is stated in terms of reward regret. The same task-aliasing phenomenon also affects safety: when the constraint-satisfying actions differ across environments in the same state, a non-contextual policy $\pi(\bm{a}|\bm{s})$ cannot simultaneously avoid unsafe actions in all tasks.
\end{remark}
 
The proposition above identifies the limitation of non-contextual policies. In contrast, a contextual policy $\pi(\bm{a}|\bm{s},\bm{z})$ can choose its action depending on
the inferred environment context. Thus, when $\bm{z}$ correctly captures the difference between $\xi_1$ and $\xi_2$, the policy will choose $\bm{a}_1$ in $\xi_1$ and $\bm{a}_2$ in $\xi_2$, removing the task-aliasing regret. In practice, however,
if the estimated $\bm{z}$ is inaccurate, the policy may still choose an action suited to the wrong environment and cause cost and safety violations. This motivates the next section, where $\bm{z}$ is refined online for \textit{Sim2Real} transfer.

\section{\textit{Sim2Real} Transfer via Latent Context Variable Adaptation}

Formal statements and proofs of the definitions, assumptions, theorems, and lemmas in this section are provided in Appendix~\ref{app_proof_all}.

\subsection{Latent Context Variable Adaptation}
\label{sec_latent}

The model mismatch between simulation and the real environment may cause safety threshold underestimation. The latent context design in Section \ref{sec_Latent_context_design} aims to excavate environmental information and mitigating this mismatch by training a latent context variable $\bm{z}$. Specifically, a well-trained policy is assumed to be safe in various simulation environments with estimated latent $\bm{\hat{z}}$ (Assumption \ref{ass_training_safety}). When the well-trained policy is transferred to the real environment with latent $\bm{z}^{*}$, the performance should be satisfactory if the \textit{actor} network takes actions under $\bm{z}^{*}$, given the following assumption \ref{ass_gap}.

\begin{assumption}
[Real environment close to training ensemble]
\label{ass_gap}
The real environment $\xi^{*}$ does not necessarily to be the same as the environments in the training process. However, the gap between simulations and the real environment could not be too far. $\xi^{*}$ should falls in the domain randomization distribution $ \mu_{env}$.
\end{assumption}

Under this assumption, $N_{real}$ context samples are collected online to refine $\bm{\hat{z}}$ to $\hat{\bm{z}}_{\xi^{*}}$. If $N_{real}$ is large enough, $\hat{\bm{z}}_{\xi^{*}} \approx \bm{z}^{*}$. 

\subsection{\textit{Sim2Real} Safety Upper-bound}
\label{sec_upper_bound}
However, if the number $N_{real}$ is not sufficient, there is an inherent mismatch between $\hat{\bm{z}}_{\xi^{*}} $ and $ \bm{z}^{*}$, which is defined as $\epsilon(N_{real}):=\mathbb{E}[||\hat{\bm{z}}_{\xi^{*}}^{(N_{real})} - z^{*}||_2]$ (Definition \ref{def_Latent_error}). The mismatch may lead to a cost threshold violation. However, this violation has an upper-bound as defined below.

\begin{theorem}
[\textit{Sim2Real} safety upper-bound under risk-neutral deployment]
\label{theo_upper_bound}
With risk-neutral policy $\pi_{exp}^{\varphi}(\bm{s}, \bm{z})$, the cost of the deployed policy satisfies,
\begin{equation}
J_{c}(\pi_{exp}^{\varphi}|\hat{\bm{z}}_{\xi^{*}}) \leq d + L_{c} \epsilon (N_{real}),
\end{equation}
where $L_{c}$ is a Lipschitz constant of the cost value function. The item $L_{c} \epsilon (N_{real})$ characterizes the extra cost introduced by the mismatch between $\hat{\bm{z}}_{\xi^{*}} $ and $ \bm{z}^{*}$. In particular, if the mismatch becomes negligible ($\epsilon(N_{real}) \approx 0$), then the deployed risk-neutral policy satisfies the cost threshold $d$ adopted in the training process.
\end{theorem}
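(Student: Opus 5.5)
The plan is a triangle-type inequality in latent space, anchored at the true latent $\bm{z}^{*}$. Throughout, $J_{c}(\pi|\bm{z})$ is the expected discounted cost of the policy acting with context $\bm{z}$ in $\mathcal{M}_{\xi^*}$. Assumption~\ref{ass_gap} places $\xi^{*}$ in the support of $\mu_{env}$. Assumption~\ref{ass_training_safety} says the trained risk-neutral policy satisfies the training constraint on environments from $\mu_{env}$ when conditioned on an accurate latent. Applying these together at $\bm{z}=\bm{z}^{*}$ gives the anchor bound $J_{c}(\pi_{exp}^{\varphi}|\bm{z}^{*})\le d$. To make this precise, I would use the identification $J_{c}(\pi|\xi)\approx J_{c}(\pi|\bm{z})$ from Section~\ref{sec_Latent_context_design} as an equality at the true latent, i.e. an exact encoder at $\bm{z}^{*}$. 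I would also make explicit that the Lagrangian training has converged to a feasible point, which is where the stationarity result of Theorem~\ref{thm:opt_stationary_const} and the fixed-point well-posedness of Theorem~\ref{thm:contraction_preserved_z} enter: for fixed $\bm{z}$, the cost return distribution $Z_c^{\pi,\bm{z}}$ and hence its mean $J_c$ are well defined.

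Next I would posit, as the definition of $L_c$, that $\bm{z}\mapsto J_{c}(\pi_{exp}^{\varphi}|\bm{z})$ is $L_c$-Lipschitz in $\|\cdot\|_2$. If one wants to derive this rather than assume it, the route is a simulation-lemma argument. Suppose $\pi^{\varphi}(\bm{s},\bm{z})$ is $L_\pi$-Lipschitz in $\bm{z}$, the cost $c$ is $L_{c,a}$-Lipschitz in the action, and the transition kernel is $L_p$-Lipschitz in the action in total variation. Unrolling the Bellman recursion from Lemma~\ref{lem:dist_contraction_z}, the per-step discrepancy is at most $(L_{c,a}+\gamma c_{\max} L_p/(1-\gamma))L_\pi\|\bm{z}-\bm{z}'\|$. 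Summing the geometric series then gives $L_c$ of order $L_\pi(L_{c,a}(1-\gamma)+\gamma c_{\max}L_p)/(1-\gamma)^2$.

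With these pieces the chain is short. First write
\[
J_{c}(\pi_{exp}^{\varphi}|\hat{\bm{z}}_{\xi^{*}})\le J_{c}(\pi_{exp}^{\varphi}|\bm{z}^{*})+L_c\|\hat{\bm{z}}_{\xi^{*}}^{(N_{real})}-\bm{z}^{*}\|_2 .
\]
Then take expectation over the random context set $\bm{D}^{(N_{real})}_{\xi^*}$, so that the norm becomes $\epsilon(N_{real})$ by Definition~\ref{def_Latent_error}. The left side is interpreted as the expected deployed cost, averaged over the encoder's randomness. Finally, insert the anchor bound $J_c(\pi_{exp}^{\varphi}|\bm{z}^*)\le d$. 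The final remark follows immediately: if $\epsilon(N_{real})\to 0$, the bound reduces to $d$.

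The main obstacle is justifying the Lipschitz step, and the anchor, rigorously. The policy $\pi^{\varphi}$ is a neural network, so Lipschitzness in $\bm{z}$ holds with some network-dependent constant. However, the training-time constraint is only satisfied approximately and only on average over $\mu_{env}$, not pointwise at $\xi^*$. The step I would try first is to take Assumption~\ref{ass_training_safety} in its pointwise form, meaning safety for every $\xi\in\mathrm{supp}(\mu_{env})$ given the correct latent. I would also absorb the encoder's intrinsic approximation error into $\epsilon$, so that the result stays clean as stated.
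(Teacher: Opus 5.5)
Your proposal is correct and follows the paper's proof. Lipschitz continuity (Assumption~\ref{ass_Lipschitz}) gives $J_c(\pi_{exp}^{\varphi}|\hat{\bm{z}})\le J_c(\pi_{exp}^{\varphi}|\bm{z}^*)+L_c\|\bm{z}^*-\hat{\bm{z}}\|_2$, Assumptions~\ref{ass_training_safety} and~\ref{ass_gap} taken pointwise supply $J_c(\pi_{exp}^{\varphi}|\bm{z}^*)\le d$, and taking expectation over $\hat{\bm{z}}$ yields $d+L_c\epsilon(N_{real})$. Your extras are unnecessary: the paper assumes Lipschitzness rather than deriving it via a simulation lemma, and Theorem~\ref{thm:opt_stationary_const} concerns stationarity of the critic/encoder potential, not constraint feasibility of the actor, so it should not be cited for the anchor bound.
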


\begin{table*}[!t]
    \centering
    \setlength{\tabcolsep}{4pt}
    \caption{Performance comparison under training and deployment across tasks. D.R. denotes Domain Randomization, Rew. denotes reward, and O.R. denotes oscillation ratio, which is defined as the ratio of the AV’s speed standard deviation to that of the preceding HDV over an episode, where a lower value indicates a greater capability of damping oscillations. }
    \label{tab:train_deploy_comparison}
    \begin{tabular}{lllccccc}
        \toprule
        Task & & Metric & Ours & Nominal & D.R. & SPiDR & Robust \\
        \midrule
        \multirow{4}{*}[-0.5ex]{\textsc{PointGoal2}}
            & \multirow{2}{*}{Train}
            & Rew.  & $35.17{\pm}3.94$ & $92.98{\pm}2.07$ & $61.20{\pm}7.22$ & $35.45{\pm}3.67$ & $30.20{\pm}5.57$ \\
        \cmidrule(lr){3-8}
            & & Cost  & $9.46{\pm}0.51$ & $9.70{\pm}0.92$ & $9.63{\pm}1.09$ & $9.95{\pm}1.00$ & $9.76{\pm}1.22$ \\
        \cmidrule(lr){2-8}
            & \multirow{2}{*}{Deploy}
            & Rew.  & $49.59{\pm}8.52$ & $91.43{\pm}7.25$ & $83.17{\pm}10.79$ & $43.44{\pm}4.14$ & $12.90{\pm}9.32$ \\
        \cmidrule(lr){3-8}
            & & Cost  & $\textbf{8.86}{\pm}\textbf{0.85}$ & $29.62{\pm}3.73$ & $35.92{\pm}5.45$ & $11.20{\pm}1.65$ & $24.07{\pm}2.55$ \\

        \specialrule{0.08em}{3pt}{3pt}

        \multirow{10}{*}[-1.0ex]{\begin{tabular}[c]{@{}c@{}}\textsc{Autonomous}\\\textsc{Driving}\end{tabular}}
            & \multirow{5}{*}{Train}
            & Rew.  & $434.10{\pm}67.06$ & $477.70{\pm}55.44$ & $455.19{\pm}67.43$ & $441.34{\pm}75.59$ & $434.81{\pm}82.98$ \\
        \cmidrule(lr){3-8}
            & & Cost  & $14.00{\pm}1.27$ & $19.27{\pm}0.94$ & $15.38{\pm}1.41$ & $15.55{\pm}1.88$ & $21.03{\pm}1.72$ \\
        \cmidrule(lr){3-8}
            & & O.R.  & $0.43{\pm}0.13$ & $0.45{\pm}0.12$ & $0.61{\pm}0.133$ & $0.45{\pm}0.09$ & $0.50{\pm}0.10$ \\
        \cmidrule(lr){3-8}
            & & Jerk  & $2.96{\pm}1.77$ & $3.23{\pm}0.62$ & $4.85{\pm}1.65$ & $3.45{\pm}0.72$ & $4.06{\pm}1.98$ \\
        \cmidrule(lr){3-8}
            & & Fuel  & $0.316{\pm}0.029$ & $0.311{\pm}0.005$ & $0.341{\pm}0.131$ & $0.316{\pm}0.006$ & $0.324{\pm}0.074$ \\
        \cmidrule(lr){2-8}
            & \multirow{5}{*}{Deploy}
            & Rew.  & $364.85{\pm}160.74$ & $284.44{\pm}275.72$ & $405.28{\pm}175.50$ & $279.88{\pm}200.05$ & $365.99{\pm}181.04$ \\
        \cmidrule(lr){3-8}
            & & Cost  & $\textbf{16.80}{\pm}\textbf{3.33}$ & $30.98{\pm}3.99$ & $21.58{\pm}4.47$ & $18.53{\pm}4.15$ & $27.80{\pm}4.00$ \\
        \cmidrule(lr){3-8}
            & & O.R.  & $\textbf{0.47}{\pm}\textbf{0.13}$ & $0.50{\pm}0.34$ & $0.60{\pm}0.15$ & $0.47{\pm}0.18$ & $0.53{\pm}0.15$ \\
        \cmidrule(lr){3-8}
            & & Jerk  & $\textbf{2.86}{\pm}\textbf{1.90}$ & $3.49{\pm}3.88$ & $3.71{\pm}1.73$ & $3.62{\pm}2.03$ & $4.51{\pm}1.59$ \\
        \cmidrule(lr){3-8}
            & & Fuel  & $\textbf{0.314}{\pm}\textbf{0.026}$ & $0.354{\pm}0.053$ & $0.321{\pm}0.019$ & $0.322{\pm}0.028$ & $0.327{\pm}0.019$ \\
        \bottomrule
    \end{tabular}
\end{table*}

\subsection{Cost Reduction via Risk-sensitive Deployment}
\label{sec_cost_redu}
To ensure sufficient safety margin at deployment, a risk-sensitive policy $\pi_{\eta}^{\varphi}$ that is more conservative than $\pi_{exp}^{\varphi}$ shall be used, where $\eta \in (0, 1)$ constrains the lower-bound of $\tau$. Instead of sampling $\tau$ from $U([0,1])$, we evaluate the cost critic using upper-tail quantiles $U([\eta,1])$, which emphasizes high-cost outcomes and yields more conservative deployment actions. In particular, the induced upper-tail cost value
$Q_c^{\eta}(s,a,z):=\mathbb{E}_{\tau\sim U([\eta,1])}[Z^{\theta_c}(s,a,z;\tau)]$
is monotone non-decreasing in $\eta$ (Lemma~\ref{lemma:monotone_tail}). 
Based on the upper-tail cost value, we can obtain the risk-sensitive action using $\pi_{\eta}^{\varphi}$ (More details could be found in Appendix \ref{app:conservative_action}).

For a given latent context variable $\bm{z}$, the cost reduction achieved by the risk-sensitive policy compared to the risk-neutral policy is defined as:
\begin{equation}
\Delta_{c}(\eta, \bm{z}):= J_{c}(\pi_{exp}^{\varphi}|\bm{z}) - J_{c}(\pi_{\eta}^{\varphi}|\bm{z}). 
\label{eq_delta_def}
\end{equation}

Since the policy refinement in the deployment stage explicitly targets a more conservative
upper-tail cost evaluation (Lemma~\ref{lemma:monotone_tail}) and enforces more conservative action selection (Appendix \ref{app:conservative_action}), the resulting policy $\pi_{\eta}^{\varphi}$ provides
a non-negative cost reduction, i.e., $\Delta_c(\eta,\bm{z}) \ge 0$.
Then, we have the following theorem that ensures \textit{Sim2Real} safety as shown below.

\begin{theorem}
    [Conditional \textit{Sim2Real} safety under risk-sensitive deployment] 
 \label{theo_Conditional}   
    When the risk-sensitive policy $\pi_{\eta}^{\varphi}$ is deployed in a real world environment under \textit{Sim2Real} gap, the cost constraint can be satisfied if $\Delta_{c}(\eta, \bm{z}) \geq L_{c} \epsilon (N_{real})$. 
\end{theorem}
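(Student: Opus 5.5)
The plan is to combine Theorem~\ref{theo_upper_bound} with the definition of the cost reduction $\Delta_c(\eta,\bm{z})$ in \eqref{eq_delta_def}. The argument is a short chain of inequalities.

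\emph{Step 1: decompose the risk-sensitive cost.} Evaluated at the deployed latent estimate $\bm{z}=\hat{\bm{z}}_{\xi^*}$, the definition \eqref{eq_delta_def} rearranges to
\begin{equation*}
J_c(\pi_\eta^{\varphi}\mid\hat{\bm{z}}_{\xi^*}) = J_c(\pi_{exp}^{\varphi}\mid\hat{\bm{z}}_{\xi^*}) - \Delta_c(\eta,\hat{\bm{z}}_{\xi^*}).
\end{equation*}

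\emph{Step 2: bound the risk-neutral cost.} Theorem~\ref{theo_upper_bound} gives $J_c(\pi_{exp}^{\varphi}\mid\hat{\bm{z}}_{\xi^*})\le d+L_c\epsilon(N_{real})$. Substituting this into Step 1 yields
\begin{equation*}
J_c(\pi_\eta^{\varphi}\mid\hat{\bm{z}}_{\xi^*})\le d+L_c\epsilon(N_{real})-\Delta_c(\eta,\hat{\bm{z}}_{\xi^*}).
\end{equation*}

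\emph{Step 3: apply the hypothesis.} The assumption $\Delta_c(\eta,\bm{z})\ge L_c\epsilon(N_{real})$ makes the last two terms non-positive, so $J_c(\pi_\eta^{\varphi}\mid\hat{\bm{z}}_{\xi^*})\le d$. This is the constraint in \eqref{eq2}.

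\emph{Main obstacle: which latent the cost is evaluated at.} Theorem~\ref{theo_upper_bound} is stated for the cost of the risk-neutral policy at $\hat{\bm{z}}_{\xi^*}$, and the $L_c\epsilon(N_{real})$ term is exactly the slack accounting for the mismatch with $\bm{z}^*$. I will read the theorem's left-hand side as the true real-world cost of the policy acting on the estimated latent. The same reading is then used for $\pi_\eta^{\varphi}$, so $\Delta_c$ has to be defined at the same conditioning.

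For this to be consistent, the hypothesis must hold at $\bm{z}=\hat{\bm{z}}_{\xi^*}$, or uniformly in $\bm{z}$, which the statement's quantifier permits. If $\Delta_c$ were instead defined at $\bm{z}^*$, I would add a Lipschitz transfer step that applies $L_c$ to both policies. That would produce an extra $2L_c\epsilon(N_{real})$ term, so I would avoid that route and keep everything at the same $\bm{z}$.

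Two further caveats will be noted:
\begin{itemize}
\item Because $\epsilon(N_{real})$ is an expectation, the conclusion should be read in expectation over the context samples.
\item The non-negativity $\Delta_c\ge 0$ from Lemma~\ref{lemma:monotone_tail} is not needed for the argument; it only makes the hypothesis attainable.
\end{itemize}
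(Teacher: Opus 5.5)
Your proposal is correct and follows essentially the same route as the paper: you write $J_c(\pi_\eta^{\varphi}\mid\hat{\bm{z}})$ via the definition of $\Delta_c$ at $\hat{\bm{z}}$, bound the risk-neutral cost by $d$ plus the Lipschitz slack, and absorb the slack using the hypothesis. The only difference is packaging: the paper applies the Lipschitz step for $\pi_{exp}^{\varphi}$ and the bound $J_c(\pi_{exp}^{\varphi}\mid\bm{z}^*)\le d$ pointwise and takes expectations only at the end, whereas you cite Theorem~\ref{theo_upper_bound} directly; your in-expectation caveat matches the rigor of the paper's own final step.
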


\subsection{Dynamic Adaptation of Risk-sensitive Policy}
\label{sec_dynamic}
In the deployment process, with more context collected, the estimation of the latent context variable $\hat{\bm{z}}_{\xi^{*}}$ becomes more accurate. Accordingly, the \textit{Sim2Real} gap decreases over time. As a result, the value of $\eta$ should be also be adjusted dynamically to maintain safety while improving performance (i.e., less conservative). This dynamic adaptation enables safe start at the early stage of the policy transfer, and efficient operation after the agent’s behavior has converged. 

We have the following theorem regarding existence of a feasible quantile $\eta$ for safe deployment.
\begin{theorem}
[Existence of a safe deployment quantile]
\label{theo:eta_feasible}

There exists an integer $N_{\min}\ge 0$ such that for all $N_{real} > N_{\min}$, one can choose a quantile $\eta^{(N_{real})}\in(0,1)$ satisfying
\begin{equation}
    \Delta_{c}(\eta^{(N_{real})}, \hat{z}_{\xi^{*}}^{(N_{real})})
    \;\ge\;
    L_{c}\,
    \epsilon(N_{real}).
\end{equation}
\end{theorem}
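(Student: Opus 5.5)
The plan is to combine the consistency of the latent estimator, $\epsilon(N_{real})\to 0$, with a strictly positive margin of cost reduction that the risk-sensitive policy achieves uniformly near $\bm{z}^{*}$. Once the required reduction $L_c\epsilon(N_{real})$ falls below that margin, a fixed quantile works.

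\emph{Step 1: consistency of the estimator.} Under Assumption~\ref{ass_gap}, $\xi^{*}\in\mathrm{supp}(\mu_{env})$, so the encoder is evaluated on an in-distribution environment. The posterior in \eqref{eq3} is a product of $N_{real}$ Gaussian factors built from i.i.d. context transitions. Its mean \eqref{eq_z} is therefore a precision-weighted average, whose variance scales like $O(1/N_{real})$. I would invoke Definition~\ref{def_Latent_error} together with the appendix assumption that the well-trained encoder concentrates on $\bm{z}^{*}$. From these I expect to conclude $\epsilon(N_{real})\to 0$, at a rate such as $O(N_{real}^{-1/2})$, and, by Markov's inequality, that $\hat{\bm{z}}^{(N_{real})}_{\xi^{*}}$ lies in any fixed ball $B_\rho(\bm{z}^{*})$ once $N_{real}$ is large.

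\emph{Step 2: a uniform positive cost-reduction margin.} I would fix some $\eta_0\in(0,1)$ and show that $\underline{\Delta}:=\inf_{\bm{z}\in B_\rho(\bm{z}^{*})}\Delta_c(\eta_0,\bm{z})>0$.
\begin{itemize}
\item Nonnegativity, $\Delta_c\ge 0$, follows from Lemma~\ref{lemma:monotone_tail} and the conservative action construction in Appendix~\ref{app:conservative_action}.
\item Strict positivity requires a non-degeneracy condition. Whenever the cost return distribution is non-degenerate, the upper-tail value strictly exceeds the mean. The conservative action then strictly lowers $J_c$ compared with $\pi^{\varphi}_{exp}$.
\item Continuity of $\Delta_c(\eta_0,\cdot)$ in $\bm{z}$ follows from the $L_c$-Lipschitz property used in Theorem~\ref{theo_upper_bound}, applied to both policies. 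Continuity on the compact ball then lets me pass from a positive value at $\bm{z}^{*}$ to a positive infimum over $B_\rho(\bm{z}^{*})$, with $\rho$ chosen small.
\end{itemize}

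\emph{Step 3: choosing $N_{\min}$.} Using Step 1, I take $N_{\min}$ large enough that two things hold for all $N_{real}>N_{\min}$: $\hat{\bm{z}}_{\xi^{*}}^{(N_{real})}\in B_\rho(\bm{z}^{*})$, and $L_c\,\epsilon(N_{real})\le\underline{\Delta}$. Setting $\eta^{(N_{real})}:=\eta_0$ then gives
\[
\Delta_c\bigl(\eta^{(N_{real})},\hat{\bm{z}}_{\xi^{*}}^{(N_{real})}\bigr)\ge\underline{\Delta}\ge L_c\,\epsilon(N_{real}).
\]
If the cost distribution is degenerate, then $\epsilon$ is not the binding quantity; in that case $N_{\min}$ is chosen so that $\epsilon(N_{real})=0$ in the limit, or the statement holds trivially with $\Delta_c=0$ when $\epsilon=0$. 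As a refinement, monotonicity in $\eta$ would let one choose the smallest feasible $\eta$, so that $\eta^{(N_{real})}$ can decrease as $\epsilon$ shrinks. This gives the dynamic schedule, but it is not needed for existence.

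\emph{Main obstacle.} Step 2 is the crux. The paper establishes only $\Delta_c\ge0$, and strict positivity that holds uniformly in $\bm{z}$ needs an extra non-degeneracy and continuity assumption. Step 1 also has a subtlety: the bound is stated in expectation, while $\hat{\bm{z}}$ is random, so I must either read the bound in expectation or pass to a high-probability version. I would handle this by working with the deterministic quantity $\epsilon(N_{real})$ and requiring the margin to hold on the whole ball, which makes $\underline{\Delta}$ deterministic.
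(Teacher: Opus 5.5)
\textbf{Overall route.} Your route is the paper's. Its proof is just your Step 3: fix one quantile, use $\epsilon(N)\to 0$, take $N_{\min}$ from the $\varepsilon$--$N$ definition with $\varepsilon=\Delta_c/L_c$, and use the constant schedule $\eta^{(N_{real})}:=\eta_0$. Your Step 1 is unnecessary, because Definition~\ref{def_Latent_error} simply stipulates $\epsilon(N)\to0$. The $O(N_{real}^{-1/2})$ rate is also unsupported, since a shrinking product-of-Gaussians variance does not make the posterior mean converge to $z^*$. The paper writes $\Delta_c(\eta)$ with no latent argument. It then calls $\Delta_c(\eta)/L_c$ a ``positive constant'' immediately after asserting only $\Delta_c\ge 0$. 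Your Step 2 therefore rightly exposes two hypotheses the paper uses silently: strict positivity at the chosen quantile, and uniformity of that margin over the random estimate $\hat{\bm{z}}^{(N_{real})}_{\xi^*}$.

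\textbf{The gap: uniformity via the ball.} Markov's inequality gives only $\mathbb{P}\bigl(\|\hat{\bm{z}}^{(N_{real})}_{\xi^*}-z^*\|_2>\rho\bigr)\le \epsilon(N_{real})/\rho$. So $\hat{\bm{z}}^{(N_{real})}_{\xi^*}\in B_\rho(z^*)$ holds only with probability tending to one, not for every $N_{real}>N_{\min}$ as Step 3 asserts. Making $\underline{\Delta}$ deterministic, your fix under ``Main obstacle'', does nothing about the event that the estimate leaves the ball. There are two clean repairs.
\begin{itemize}
\item Assume the margin globally, $\inf_{z}\Delta_c(\eta_0,z)>0$ over the range of the encoder mean. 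This is what the paper's $z$-free notation amounts to, and it makes the ball superfluous.
\item Prove the statement in expectation, which matches the paper's conclusion that deployment is safe \emph{in expectation}. Assumption~\ref{ass_Lipschitz} applied to both policies makes $\Delta_c(\eta_0,\cdot)$ $2L_c$-Lipschitz. Hence $\mathbb{E}\bigl[\Delta_c(\eta_0,\hat{\bm{z}}^{(N_{real})}_{\xi^*})\bigr]\ge \Delta_c(\eta_0,z^*)-2L_c\epsilon(N_{real})\ge L_c\epsilon(N_{real})$ once $\epsilon(N_{real})\le \Delta_c(\eta_0,z^*)/(3L_c)$.
\end{itemize}

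\textbf{Positivity must be assumed.} Delete the degenerate-case sentence. If $\Delta_c(\eta,\cdot)\equiv 0$ for all $\eta$ while $\epsilon(N)>0$ for every finite $N$, the inequality is simply false. Positivity is therefore a necessary hypothesis, not a case to dispatch. It also does not follow from Appendix~\ref{app:conservative_action} as you sketch. The refinement only acts where $Q_c^{\eta}(s,a_0,z)>d$, interleaves cost descent with reward ascent, and lowers the critic's estimate rather than the true $J_c$. The paper's own experiments report $\Delta_c$ approaching zero around $\eta\approx 0.25$. So $\eta_0$ must be a quantile at which positivity is assumed, not an arbitrary one.
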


According to Theorem \ref{theo_Conditional} and \ref{theo:eta_feasible}, a desired $\eta$ that ensures safety is obtainable after collecting $N_{real} > N_{\min}$ contexts. 
To achieve dynamically adaptation of $\eta$ during deployment, an offline simulation calibration method is developed. Specifically, the relationships between $\epsilon$ and $N_{\text{real}}$, $L_c$ and $N_{\text{real}}$, as well as $\Delta_c$ and $\eta$, are calibrated offline using extensive simulations. Details of this calibration process are provided in Appendix~\ref{appendix_offline}.
Therefore, given $N_{real}$ contexts with corresponding $\eta^{(N_{real})}$, the system would be safe in expectation.
As more real world contexts are collected, the estimated $\hat{\bm{z}}_{\xi^{*}}$ is approaching $z^*$. Then the value of $\eta$ can be decreased gradually. Eventually, the risk-sensitive policy becomes a risk-neutral policy. 

\begin{remark}
At the beginning of deployment, when no real context is available, the prior
$p(\bm{z})=\mathcal{N}(\bm{0}_{d_z}, I_{d_z})$ is used to sample the latent context variable.
The initial value of $\eta$ is obtained from the offline calibration results described above.
\end{remark}

\section{Experiment}

The experiments are conducted on two tasks: 1) the \textsc{PointGoal2} task from the well-established benchmark OpenAI Safety Gym \cite{ ray2019benchmarking}, and 2) a typical Autonomous Driving task. More details about the experimental scenarios and the settings can be found in Appendix \ref{appendix_experiment}. The number of training environments is 2048 with various random seeds, and the evaluation is conducted on 64 different new environments with 10 random seeds for each one, which are generated from different out-of-distribution (OOD) levels, as shown in Table~\ref{tab:pointgoal2_params_ood}. 

\paragraph{Baselines.} 
We compare our proposed method with several baselines in both tasks: (i) Nominal is a simple baseline that trains the agent only under the nominal condition; (ii) Domain Randomization trains the agent under perturbed parameters sampled from the training distribution; (iii) SPiDR uses a provable pessimistic safety upper bound during training so that the action is more conservative during deployment \cite{as2025spidr}; and (iv) Robust RL adopts adversarial training under carefully designed observational perturbations, such that the worst-case trajectories could be optimized \cite{liu2023robustness}. All experiments are conducted using IQN \cite{dabney2018implicit} and Lagrangian \cite{stooke2020responsive} methods to solve the CMDP problems, unless otherwise specified. 

\begin{figure}[t]
  \centering
  \begin{subfigure}{0.9\columnwidth}
    \centering
    \includegraphics[width=\linewidth]{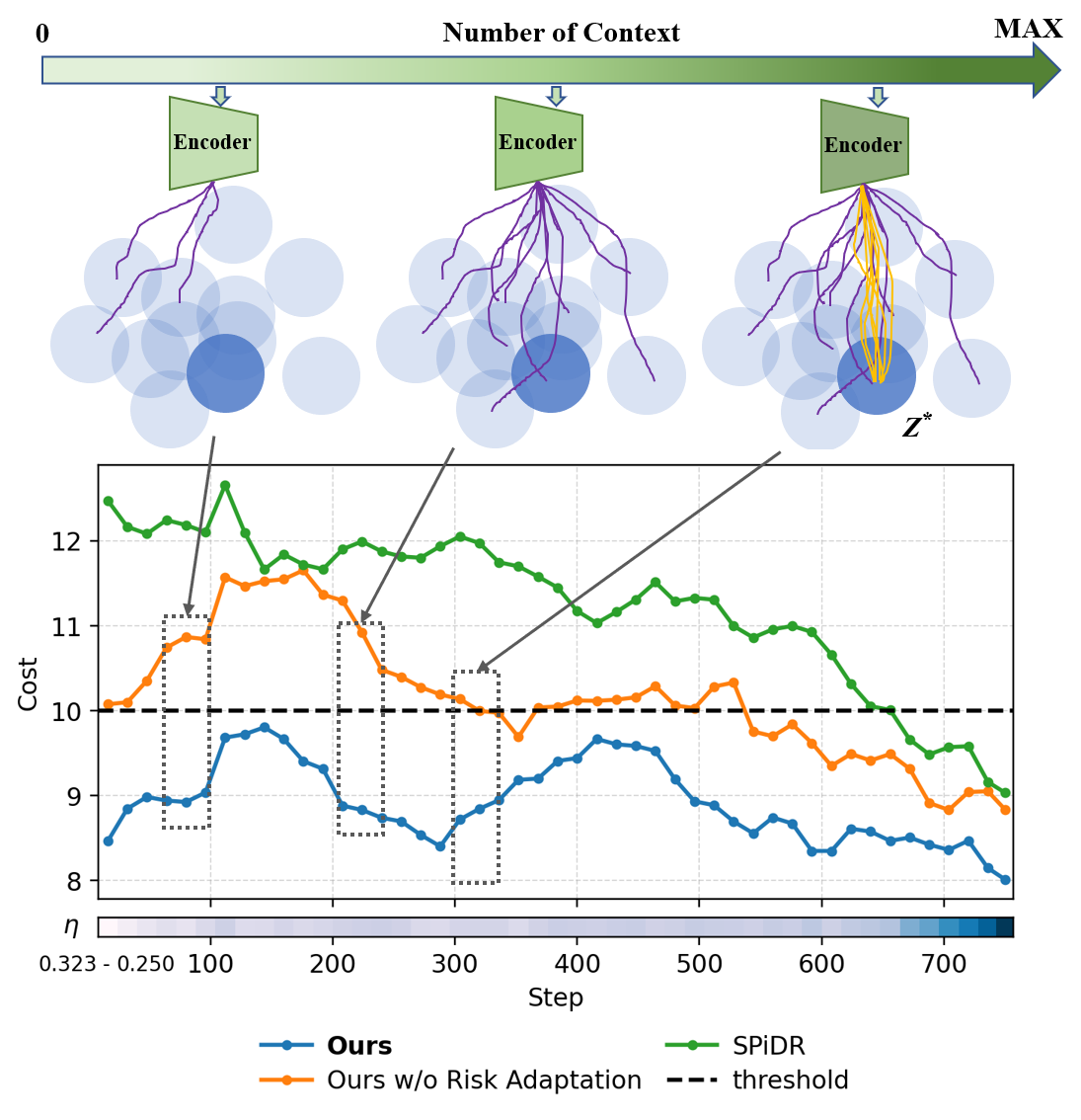}
    \caption{Cost curve.}
    \label{fig:deploy_cost}
  \end{subfigure}

  \vspace{0.3em}

  \begin{subfigure}{0.9\columnwidth}
    \centering
    \includegraphics[width=\linewidth]{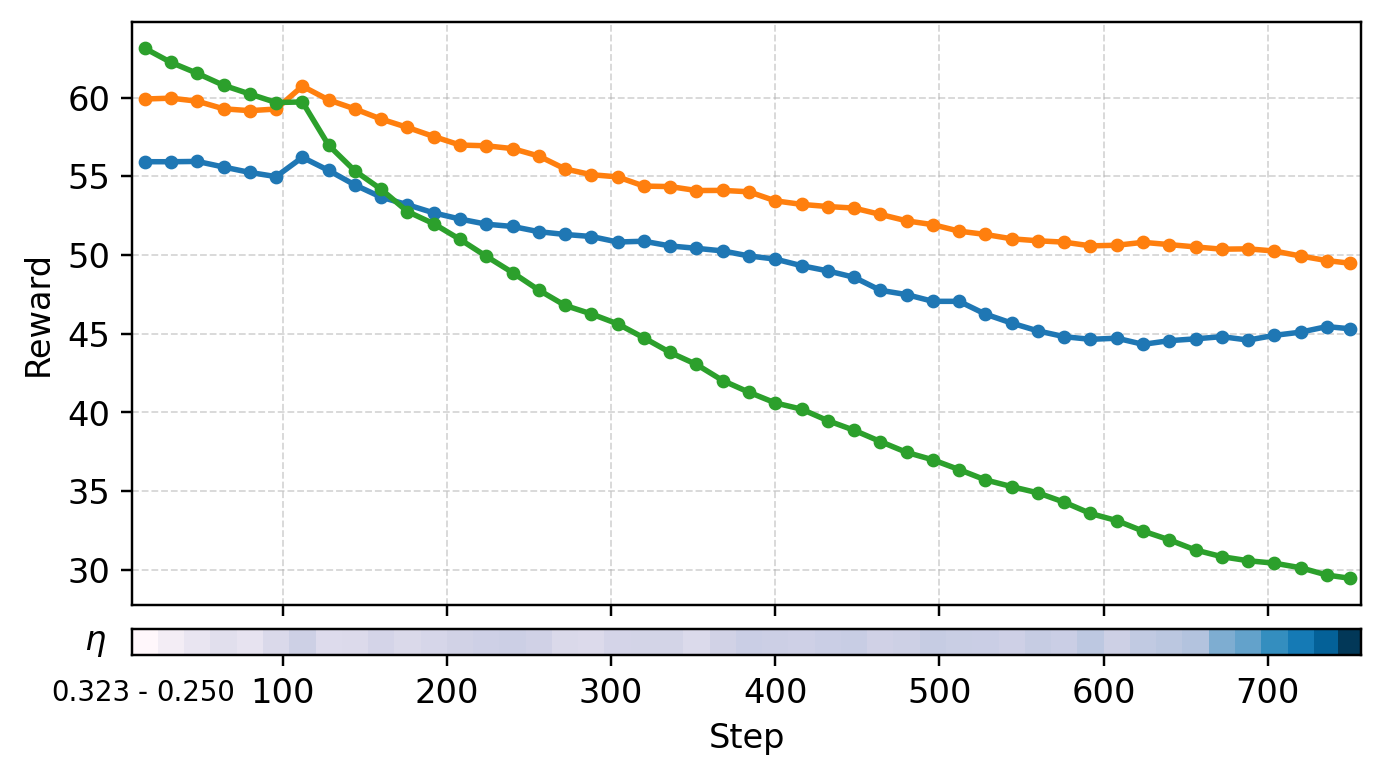}
    \caption{Reward curve.}
    \label{fig:deploy_reward}
  \end{subfigure}

  \caption{Deployment performance under dynamic risk-sensitive adaptation in \textsc{PointGoal2} task.}
  \label{fig:deploy_metrics}
\end{figure}

\paragraph{Overall performance.}
The overall training and deployment results are summarized in Table~\ref{tab:train_deploy_comparison}. During training, all methods successfully learn to reach the goal without violating the cost constraints (i.e., 10 in \textsc{PointGoal2} task and 20 in Autonomous Driving task). In deployment, however, a clear performance divergence emerges. 
Nominal, Domain Randomized, and Robust RL baselines achieve high rewards but suffer from severe cost violations, indicating limited safety in the deployment environment with \textit{Sim2Real} gap. In contrast, both SPiDR and our method exhibit conservative behaviors and have substantially lower costs, at the expense of lower rewards. This reward-cost trade-off is expected in both tasks, where more cautious navigation leads to better hazards avoidance in unseen environments in the first task, and more conservative AV speed-control reduce the risk of rear-end collisions in the second task. Notably, compared to SPiDR, our method achieves a more favorable trade-off by attaining lower deployment costs while simultaneously achieving higher rewards. Besides, in the Autonomous Driving task, our method also shows strong oscillation-reduction (O.R.) effects, reducing O.R., jerk, and fuel consumption from $0.922{\pm}0.017$, $12.329{\pm}0.195$, and $0.440{\pm}0.008$ under the default FVD controller \cite{jiang2001full} to $0.47{\pm}0.13$, $2.86{\pm}1.90$, and $0.314{\pm}0.026$ under deployment.

\paragraph{Dynamic risk-sensitive adaptation.}
Figure~\ref{fig:deploy_metrics} (a) illustrates the change of cost value over collected contexts in the deployment stage using different methods in the \textsc{PointGoal2} task. 
Our method (blue curve) consistently maintains the cost below the threshold throughout the entire deployment process, demonstrating a clear safety advantage over other baselines. 
The safety guaranty is achieved by the proposed dynamic risk-sensitive adaptation module, which actively regulates the policy during deployment. 
When this adaptation module is removed and agent starts with the risk-neural policy, the cost (orange curve) initially exceeds the threshold due to limited real world context, and then gradually decreases as additional interactions improve the latent context variable estimation (i.e., $\hat{\bm{z}}_{\xi^*}^{(N_{\text{real}})}$).

This comparison shows that the dynamic risk-sensitive adaptation module accommodates the initial uncertainty by enforcing conservative behavior in the early deployment stage, thereby ensuring safety until sufficient context has been collected. The evolution of the risk parameter $\eta$ further reflects this adaptation process, transitioning from a risk-sensitive to a more risk-neutral policy as the context number increases. Note that $\eta$ does not decrease to zero in the experiments, as the cost discrepancy $\Delta_c$ approaches zero around $\eta \approx 0.25$, which is also consistent with the offline calibration results.
Meanwhile, the reward curves in Figure~\ref{fig:deploy_metrics} (b) show that our method maintains high reward values despite operating under a risk-sensitive policy. In contrast, under SPiDR, the agent may adopt an overly conservative behavior by completely stopping the agent to avoid cost violations, which reduces both the accumulated reward and the incurred cost. A demonstration video of agent behaviors under different methods is provided in Appendix~\ref{app_sup_exp}.

\begin{figure}[t]
  \centering
  \begin{subfigure}{0.85\columnwidth}
    \centering
    \includegraphics[width=\linewidth]{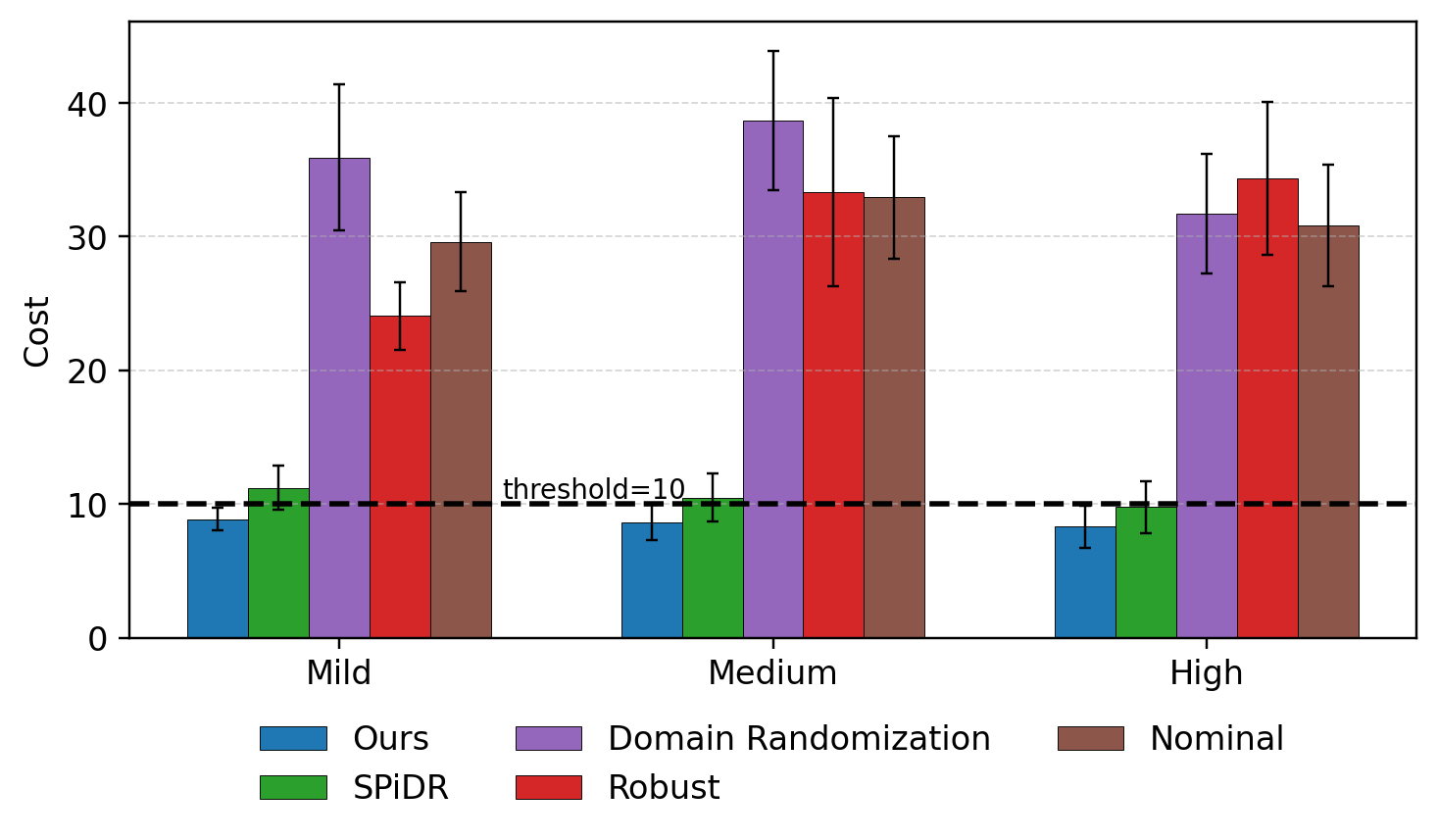}
    \caption{Cost.}
  \end{subfigure}

  \vspace{0.3em}

  \begin{subfigure}{0.85\columnwidth}
    \centering
    \includegraphics[width=\linewidth]{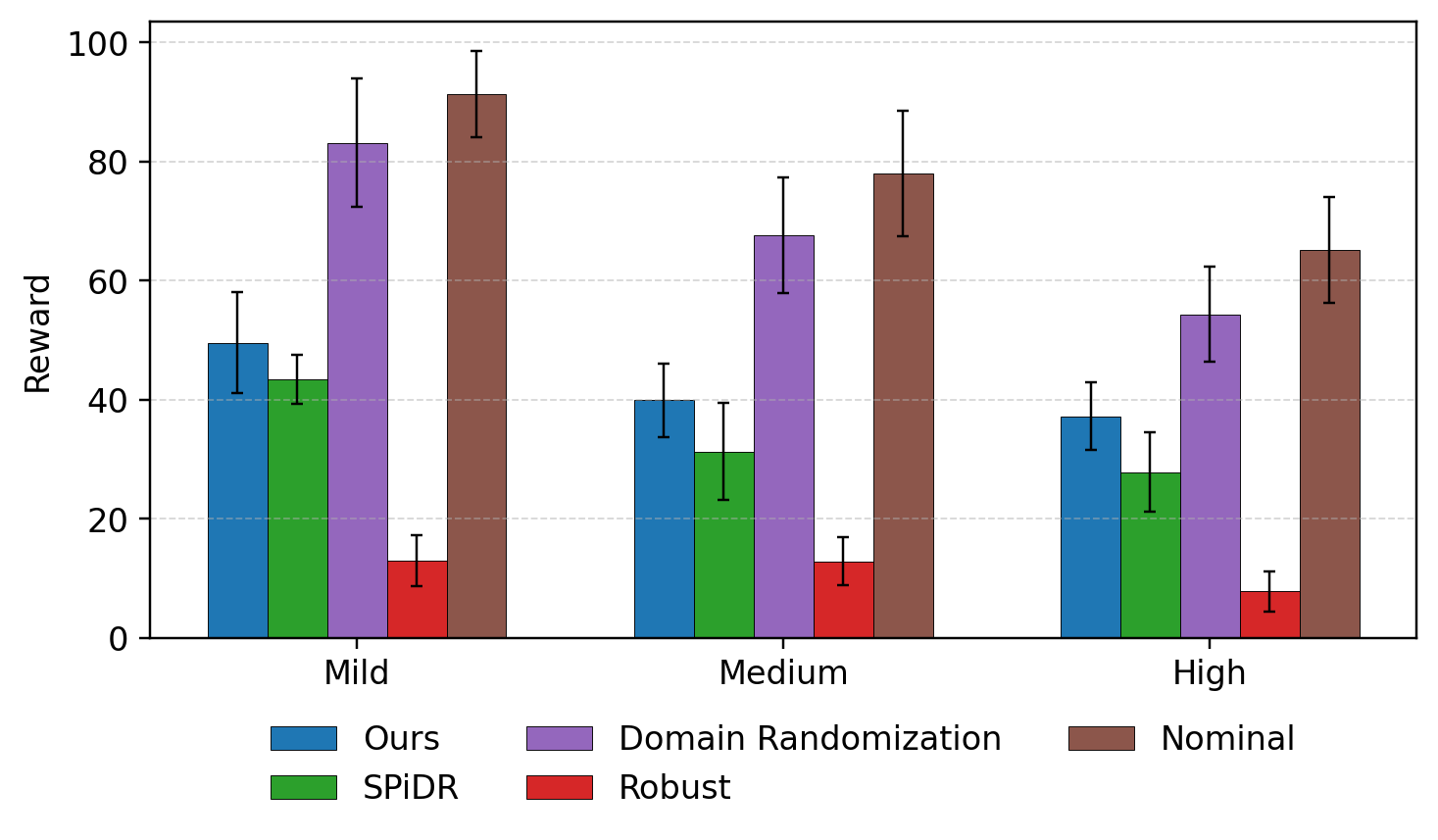}
    \caption{Reward.}
  \end{subfigure}

  \caption{Performance under mild, medium, and high OOD level scenarios in \textsc{PointGoal2} task.}
  \label{fig:deploy_OOD}
\end{figure}

\paragraph{OOD sensitivity analysis.}
To evaluate the robustness of the proposed model under different distributional shifts, we conduct a sensitivity analysis with regard to different OOD levels, with detailed settings in Appendix~\ref{appendix_experiment}. 
As shown in Figure~\ref{fig:deploy_OOD}, increasing OOD level generally leads to higher costs and lower rewards. Nevertheless, our method consistently maintains deployment costs below the threshold across all OOD scenarios. 

The ablation study, online computational overhead and complexity analysis could be found in in Appendix~\ref{app_sup_exp}.

\section{Conclusions and Discussions}
This work presents a transferable reinforcement learning framework that enables policies trained in simulation to be deployed in real-world environments with safety assurance. By integrating probabilistic latent context embedding with distributional reinforcement learning, the proposed framework explicitly addresses safety violations caused by environment mismatch, known as the \textit{Sim2Real} gap. Through dynamic risk-sensitive policy adaptation with limited real-world interactions and a principled safety upper bound, the framework provably ensures safe deployment while gradually relaxing risk-sensitive policies as the latent context estimation becomes more accurate. Extensive experiments on the \textsc{PointGoal2} task and Autonomous Driving task validate the effectiveness of the proposed approach. Across training, deployment, and multiple OOD scenarios, the method consistently maintains deployment costs below the threshold while achieving competitive rewards. In contrast to baseline methods, which either violate safety constraints or adopt overly conservative behaviors that degrade task performance, our method achieves a more favorable reward-cost trade-off. 

Offline RL is another useful backbone to solve these safety-critical tasks. In principle, the proposed framework can also be implemented in the offline RL context using trajectories from real world environments. However, because collecting real-world trajectories is costly, the amount of available data is often limited and biased. As a result, both the encoder of the context variable $z$ and the estimates of the value function distribution $Z_{r}$, $Z_{c}$ may be inaccurate. In particular, limited coverage of system dynamics can lead to poor estimation of the context variable, while the lack of edge-case samples can result in inaccurate estimation of the tails of the value function distributions.

For future work, evaluating the proposed framework in a broader range of tasks could further assess its generality and robustness. Besides, in this work, we mainly consider adaptation to an unknown but fixed environment during deployment. Handling continuously changing environments is an interesting topic. One possible direction is to design a detection mechanism for changes in the environment, such as a sudden shift from dry pavement to icy pavement. By integrating the detection module with our framework, the new method could better handle dynamically changing environments.





\section*{Acknowledgments}

This research is supported in part by the U.S. National Science Foundation (NSF) through Grant CMMI \#2339753. We sincerely thank the reviewers for their constructive comments, which helped improve the quality of this work. 



\section*{Impact Statement}


This paper presents work whose goal is to advance the field of Machine Learning. There are many potential societal consequences of our work, none
which we feel must be specifically highlighted here.


\nocite{langley00}

\bibliography{example_paper}
\bibliographystyle{icml2026}

\newpage
\appendix
\onecolumn



\section*{Appendix}

\section{Proofs}
\label{app_proof_all}
\subsection{Preliminaries}

\begin{definition}
[Latent context estimation error]
\label{def_Latent_error}
In the real environment with $\xi^*$, the latent context estimation from $N_{real}$ context samples is $\hat{\bm{z}}_{\xi^{*}} = \mathbb{E}_{q_{\phi}(\bm{z}|\bm{D}_{\xi^{*}}^{N_{real}})}[\bm{z}]$. With the real $z^{*}$, the latent context estimation error is defined as:
\begin{equation}
    \epsilon(N_{real}):=\mathbb{E}[||\hat{\bm{z}}_{\xi^{*}} - z^{*}||_2],
\end{equation}
with $\epsilon(N_{real}) \rightarrow 0$ as $N_{real} \rightarrow \infty$. The encoder’s mean output approaches a stable latent representation $z^{*}$ for the real environment as more real data is collected.
\end{definition}

\begin{assumption}
 [Lipschitz continuity]
 \label{ass_Lipschitz}
 For any policy $\pi$ in any environment with $\xi$, the expected cost function $J_{c}(\pi|\xi)$ is $L_{c}$-Lipschitz continuous. Therefore, for any two latent context $\bm{z}_1$ and $\bm{z}_2$,
\begin{equation}
    |J_{c}(\pi|\bm{z}_1) - J_{c}(\pi|\bm{z}_2)| \leq L_{c}||\bm{z}_1 - \bm{z}_2||_{2},
\end{equation}
which captures that the CMDP cost changes smoothly as the environment latent changes.
\end{assumption}

\begin{assumption}
[Training safety guarantee]
\label{ass_training_safety}
The Lagrangian training in simulation yields a risk-neutral policy $\pi_{exp}^{\varphi}$ that satisfies the safety bound $J_{c}(\pi_{exp}^{\varphi}|\hat{\bm{z}}) \leq d$.
\end{assumption}

\begin{assumption}[Probabilistic calibration validity]
\label{ass:calib}
All simulation-calibrated quantities 
$\epsilon_{\text{sim}}(N)$, $L_{c,\text{sim}}$, and
$\Delta_{\text{sim}}(\eta)$ upper/lower bound their real counterparts
with probability at least $q_{\text{safe}}:=\min\{ q_{\epsilon}, q_{L}, 1-q_\Delta\}$.

More precisely, for any $N\ge 0$,
\begin{equation}
\mathbb{P}\!\left(
\|\hat{z}_{\xi^*}^{(N)}-z^*\|_2
\le \epsilon_{\text{sim}}(N)
\right)\ge q_{\epsilon},
\end{equation}
\begin{equation}
\mathbb{P}\!\left(
|J_c(\pi|z^*)-J_c(\pi|\hat{z}_{\xi^*}^{(N)})|
\le L_{c,\text{sim}}\,\epsilon_{\text{sim}}(N)
\right)\ge q_{L},
\end{equation}
and for any $\eta\in(0,1)$,
\begin{equation}
\mathbb{P}\!\left(
\Delta(\eta,z^*)\ge \Delta_{\text{sim}}(\eta)
\right)\ge 1-q_\Delta.
\end{equation}
\end{assumption}

\begin{lemma}[Monotonicity of upper-tail cost value]
\label{lemma:monotone_tail}
Fix $(s,a,z)$ and consider the IQN-based cost critic
$Z^{\theta_c}(s,a,z;\tau)$.
Define the upper-tail cost value
\[
Q_c^{\eta}(s,a,z)
:= \mathbb{E}_{\tau\sim U([\eta,1])}
\big[ Z^{\theta_c}(s,a,z;\tau) \big], \quad \eta\in(0,1).
\]
For any $0<\eta_1 \le \eta_2 <1$, it holds that
\[
Q_c^{\eta_1}(s,a,z)\ \le\ Q_c^{\eta_2}(s,a,z).
\]
\end{lemma}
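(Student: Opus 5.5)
The plan is to reduce the claim to an elementary fact about averages of nondecreasing functions. Fix $(s,a,z)$ and write $g(\tau):=Z^{\theta_c}(s,a,z;\tau)$ for $\tau\in[0,1]$. Since $Z^{\theta_c}$ represents the quantile function $F_Z^{-1}(\tau)$ of the cost return (Section~\ref{sec_Distributional_RL}), I will treat $g$ as nondecreasing in $\tau$. Costs lie in $[0,c_{\xi,\max}]$, so returns lie in $[0,c_{\xi,\max}/(1-\gamma)]$ and $g$ is bounded, hence integrable. With this, the upper-tail value is the normalized integral
\[
Q_c^{\eta}(s,a,z)=\frac{1}{1-\eta}\int_{\eta}^{1} g(\tau)\,d\tau ,
\]
and it remains to show that this tail average is nondecreasing in $\eta$.

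For $0<\eta_1\le\eta_2<1$ (the case $\eta_1=\eta_2$ is trivial), I split $[\eta_1,1]=[\eta_1,\eta_2]\cup[\eta_2,1]$. This writes $Q_c^{\eta_1}$ as a convex combination
\[
Q_c^{\eta_1}=\frac{\eta_2-\eta_1}{1-\eta_1}\,A+\frac{1-\eta_2}{1-\eta_1}\,Q_c^{\eta_2},
\qquad A:=\frac{1}{\eta_2-\eta_1}\int_{\eta_1}^{\eta_2} g(\tau)\,d\tau .
\]
Monotonicity of $g$ gives $A\le g(\eta_2)$, because $g(\tau)\le g(\eta_2)$ for $\tau\le\eta_2$. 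It also gives $g(\eta_2)\le Q_c^{\eta_2}$, because $g(\tau)\ge g(\eta_2)$ for $\tau\ge\eta_2$. Hence $A\le Q_c^{\eta_2}$. Since the weights are nonnegative and sum to one, $Q_c^{\eta_1}\le Q_c^{\eta_2}$.

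An alternative route is to differentiate. One has $\frac{d}{d\eta}Q_c^{\eta}=\frac{1}{1-\eta}\big(Q_c^{\eta}-g(\eta)\big)$ at continuity points of $g$, and this is $\ge 0$ by the same tail-average argument. I prefer the convex-combination argument because it needs no continuity of $g$.

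The main obstacle is justifying that the learned IQN output is monotone in $\tau$. The true quantile function is nondecreasing, but an unconstrained IQN critic can exhibit quantile crossing, and the lemma can fail without monotonicity. I would state explicitly that $Z^{\theta_c}(s,a,z;\cdot)$ is taken to be a valid quantile function. This holds, for instance, when monotonicity is enforced architecturally (nonnegative increments in the $\tau$-embedding) or by sorting the sampled quantiles before averaging. It also holds in the idealized setting where the critic equals the fixed point $Z_c^{\pi,z}$ from Theorem~\ref{thm:contraction_preserved_z}, whose quantile function is automatically nondecreasing. Under this standing interpretation, the argument above completes the proof.
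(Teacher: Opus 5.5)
Your proof is correct, but it takes a different route from the paper's. The paper reparameterizes both tail averages with a common $u\sim U([0,1])$ via $\tau=\eta+(1-\eta)u$. It notes that $\eta_1+(1-\eta_1)u\le\eta_2+(1-\eta_2)u$ pointwise, since the difference is $(\eta_2-\eta_1)(1-u)\ge 0$. It then concludes by monotonicity of the quantile function and taking the expectation over $u$. This is in effect a coupling argument showing that $U([\eta_2,1])$ first-order stochastically dominates $U([\eta_1,1])$. You instead split $[\eta_1,1]$ at $\eta_2$, write $Q_c^{\eta_1}$ as a convex combination of the lower-segment average $A$ and $Q_c^{\eta_2}$, and use $g(\eta_2)$ as a separator to obtain $A\le Q_c^{\eta_2}$.

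The paper's version is shorter and extends verbatim to any pair of stochastically ordered distributions over quantile levels. Yours gives an explicit formula for the gap, $Q_c^{\eta_2}-Q_c^{\eta_1}=\frac{\eta_2-\eta_1}{1-\eta_1}\bigl(Q_c^{\eta_2}-A\bigr)$, which quantifies how much extra conservatism raising $\eta$ buys. Neither argument needs continuity of $g$.

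Your remark about quantile crossing is well taken. The paper simply asserts that $Z^{\theta_c}(s,a,z;\cdot)$ is non-decreasing in $\tau$. That holds for a true quantile function but not automatically for an unconstrained IQN output. Stating this as an explicit standing assumption, or enforcing it architecturally or by sorting, makes your version more rigorous than the paper's; it is not a gap in your argument.
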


\subsection{Proof of Lemma \ref{lem:dist_contraction_z} (Latent-conditioned contraction)}
\begin{proof}
Fix any $(\bm{s},\bm{a})$ and the same latent $\bm{z}$.
Let $\bm{s}'\sim p(\cdot|\bm{s},\bm{a},\bm{z})$.
For $i\in\{1,2\}$, define the Bellman target random return
\begin{equation}
Y_i := r(\bm{s},\bm{a},\bm{z}) + \gamma\, X_i,
\end{equation}
where $X_i \sim Z_i(\bm{s}',\pi(\bm{s}',\bm{z}),\bm{z})$.
By definition, the distribution of $Y_i$ equals
$(\mathcal{T}_{r}^{\pi,\bm{z}} Z_i)(\bm{s},\bm{a})$.

First sample $\bm{s}'\sim p(\cdot|\bm{s},\bm{a},\bm{z})$.
Conditioned on this realized $\bm{s}'$, take an \emph{optimal} coupling
$(X_1,X_2)$ between the two distributions
$Z_1(\bm{s}',\pi(\bm{s}',\bm{z}),\bm{z})$ and
$Z_2(\bm{s}',\pi(\bm{s}',\bm{z}),\bm{z})$.
This defines a valid coupling of $(Y_1,Y_2)$ with correct marginals.

By the definition of Wasserstein distance as the infimum over all couplings,
for our particular coupling we have
\begin{equation}
\label{eq:Wp_coupling_bound}
W_p^p\!\left((\mathcal{T}_{r}^{\pi,\bm{z}} Z_1)(\bm{s},\bm{a}),\ (\mathcal{T}_{r}^{\pi,\bm{z}} Z_2)(\bm{s},\bm{a})\right)
\le \mathbb{E}\!\left[|Y_1-Y_2|^p\right].
\end{equation}
Since $Y_1-Y_2=\gamma(X_1-X_2)$ and $r(\bm{s},\bm{a},\bm{z})$ cancels out,
\begin{equation}
\mathbb{E}\!\left[|Y_1-Y_2|^p\right]
= \gamma^p\,\mathbb{E}\!\left[|X_1-X_2|^p\right].
\end{equation}
By optimality of the conditional coupling of $(X_1,X_2)$ given $\bm{s}'$,
\begin{equation}
\mathbb{E}\!\left[|X_1-X_2|^p \,\middle|\, \bm{s}'\right]
= W_p^p\!\left(Z_1(\bm{s}',\pi(\bm{s}',\bm{z}),\bm{z}),\ Z_2(\bm{s}',\pi(\bm{s}',\bm{z}),\bm{z})\right).
\end{equation}
Taking expectation over $\bm{s}'$ and using the supremum definition of $\bar d_p$,
\begin{equation}
\mathbb{E}\!\left[|X_1-X_2|^p\right]
\le \bar d_p^p(Z_1,Z_2).
\end{equation}
Plugging into \eqref{eq:Wp_coupling_bound} and taking the $p$-th root yields
\begin{equation}
W_p\!\left((\mathcal{T}_{r}^{\pi,\bm{z}} Z_1)(\bm{s},\bm{a}),\ (\mathcal{T}_{r}^{\pi,\bm{z}} Z_2)(\bm{s},\bm{a})\right)
\le \gamma\,\bar d_p(Z_1,Z_2).
\end{equation}
Finally take $\sup_{\bm{s},\bm{a}}$ on both sides to obtain \eqref{eq:contraction_z}.
The cost case is identical by replacing $r$ with $c$.
\end{proof}

\subsection{Proof of Theorem \ref{thm:contraction_preserved_z} (Adding latent context $\bm{z}$ preserves Bellman contraction and fixed-point well-posedness)}
\begin{proof}
The $\gamma$-contraction property follows directly from Lemma~\ref{lem:dist_contraction_z}.
By the Banach fixed-point theorem \cite{kreyszig1991introductory}, a contraction mapping on a complete metric space admits a unique fixed point, and repeated application of the operator converges to it at rate $\gamma^k$ under the same metric $\bar d_p$.
The cost case is identical.
\end{proof}

\subsection{Proof of Lemma \ref{lem:descent} (Descent lemma (smooth upper bound))}

\begin{proof}
Define $g(t):=\mathcal{J}(x+t\Delta)$ for $t\in[0,1]$.
Then $g'(t)=\langle \nabla\mathcal{J}(x+t\Delta),\Delta\rangle$ and
\begin{equation}
\mathcal{J}(x+\Delta)-\mathcal{J}(x)=\int_{0}^{1} g'(t)\,dt
=\int_{0}^{1}\langle \nabla\mathcal{J}(x+t\Delta),\Delta\rangle dt.
\end{equation}
Add and subtract $\nabla\mathcal{J}(x)$ inside the inner product and apply Cauchy--Schwarz:
\begin{equation}
\mathcal{J}(x+\Delta)
=
\mathcal{J}(x)+\langle \nabla\mathcal{J}(x),\Delta\rangle
+\int_{0}^{1}\langle \nabla\mathcal{J}(x+t\Delta)-\nabla\mathcal{J}(x),\Delta\rangle dt
\end{equation}
\begin{equation}
\le
\mathcal{J}(x)+\langle \nabla\mathcal{J}(x),\Delta\rangle
+\int_{0}^{1}\|\nabla\mathcal{J}(x+t\Delta)-\nabla\mathcal{J}(x)\|_2\ \|\Delta\|_2\, dt.
\end{equation}
Using $L$-Lipschitz gradients,
$\|\nabla\mathcal{J}(x+t\Delta)-\nabla\mathcal{J}(x)\|_2\le Lt\|\Delta\|_2$,
thus
\begin{equation}
\mathcal{J}(x+\Delta)
\le
\mathcal{J}(x)+\langle \nabla\mathcal{J}(x),\Delta\rangle
+\int_{0}^{1} Lt\|\Delta\|_2^2\, dt
=
\mathcal{J}(x)+\langle \nabla\mathcal{J}(x),\Delta\rangle+\frac{L}{2}\|\Delta\|_2^2.
\end{equation}
\end{proof}

\subsection{Proof of Theorem \ref{thm:opt_stationary_const} (Alternating frozen training is stable under constant step sizes)}
\begin{proof}
Define the stacked parameter vector
\begin{equation}
x_k := (\theta_{r,k},\theta_{c,k},\phi_k),
\end{equation}
and let $\mathcal{F}_k$ denote the $\sigma$-algebra generated by all randomness up to iteration $k$
(e.g., past mini-batches / replay-buffer samples, environment transitions, and the current parameters).
Assume the stochastic gradients are conditionally unbiased with bounded conditional second moments:
\begin{equation}
\label{eq:unbiased_grad_const2}
\mathbb{E}\!\left[\widehat{\nabla}_{\theta_r}\mathcal{L}_r^{\text{critic}}(\theta_{r,k};\phi_k)\mid \mathcal{F}_k\right]
=\nabla_{\theta_r}\mathcal{L}_r^{\text{critic}}(\theta_{r,k};\phi_k),
\qquad
\mathbb{E}\!\left[\big\|\widehat{\nabla}_{\theta_r}\mathcal{L}_r^{\text{critic}}(\theta_{r,k};\phi_k)\big\|_2^2\mid \mathcal{F}_k\right]\le G_r^2,
\end{equation}
and analogously for $\widehat{\nabla}_{\theta_c}\mathcal{L}_c^{\text{critic}}$ and
$\widehat{\nabla}_{\phi}\mathcal{L}^{\text{encoder}}$ with bounds $G_c^2$ and $G_\phi^2$.

Recall the potential function
\begin{equation}
\label{eq:J_def_again_const2}
\mathcal{J}(\theta_r,\theta_c,\phi)
:=\beta_r \mathcal{L}_{r}^{\text{critic}}(\theta_r;\phi)
+\beta_c \mathcal{L}_{c}^{\text{critic}}(\theta_c;\phi)
+\beta_{KL}\mathcal{L}_{KL}(\phi).
\end{equation}

\paragraph{(I) Critic step decrease (encoder frozen).}
Define the intermediate iterate $\tilde{x}_{k+1}:=(\theta_{r,k+1},\theta_{c,k+1},\phi_k)$ after the critic update.
Apply Lemma~\ref{lem:descent} to the mapping
$(\theta_r,\theta_c)\mapsto \mathcal{J}(\theta_r,\theta_c,\phi_k)$ with update
\begin{equation}
\Delta^{\theta_r}_k := \theta_{r,k+1}-\theta_{r,k}=-\alpha\,\widehat{\nabla}_{\theta_r}\mathcal{L}_r^{\text{critic}}(\theta_{r,k};\phi_k),
\qquad
\Delta^{\theta_c}_k := \theta_{c,k+1}-\theta_{c,k}=-\alpha\,\widehat{\nabla}_{\theta_c}\mathcal{L}_c^{\text{critic}}(\theta_{c,k};\phi_k).
\end{equation}
Then there exists $L_\theta>0$ such that
\begin{align}
\label{eq:critic_descent_1_const2}
\mathcal{J}(\tilde{x}_{k+1})
&\le \mathcal{J}(x_k)
+\big\langle \nabla_{\theta_r}\mathcal{J}(x_k),\Delta^{\theta_r}_k\big\rangle
+\big\langle \nabla_{\theta_c}\mathcal{J}(x_k),\Delta^{\theta_c}_k\big\rangle
+\frac{L_\theta}{2}\Big(\|\Delta^{\theta_r}_k\|_2^2+\|\Delta^{\theta_c}_k\|_2^2\Big).
\end{align}
Taking conditional expectation given $\mathcal{F}_k$ and using \eqref{eq:unbiased_grad_const2} yields
\begin{align}
\label{eq:critic_descent_2_const2}
\mathbb{E}\!\left[\mathcal{J}(\tilde{x}_{k+1})\mid \mathcal{F}_k\right]
&\le \mathcal{J}(x_k)
-\alpha\Big(\|\nabla_{\theta_r}\mathcal{J}(x_k)\|_2^2+\|\nabla_{\theta_c}\mathcal{J}(x_k)\|_2^2\Big)
+\frac{L_\theta\alpha^2}{2}\Big(G_r^2+G_c^2\Big).
\end{align}

\paragraph{(II) Encoder step decrease (critics frozen).}
Now perform the encoder update to obtain $x_{k+1}=(\theta_{r,k+1},\theta_{c,k+1},\phi_{k+1})$.
Apply Lemma~\ref{lem:descent} to the mapping $\phi\mapsto \mathcal{J}(\theta_{r,k+1},\theta_{c,k+1},\phi)$
with update
\begin{equation}
\Delta^\phi_k := \phi_{k+1}-\phi_k
= -\beta\,\widehat{\nabla}_{\phi}\mathcal{L}^{\text{encoder}}(\phi_k;\theta_{r,k+1},\theta_{c,k+1}).
\end{equation}
Then there exists $L_\phi>0$ such that
\begin{align}
\label{eq:enc_descent_1_const2}
\mathcal{J}(x_{k+1})
&\le \mathcal{J}(\tilde{x}_{k+1})
+\big\langle \nabla_{\phi}\mathcal{J}(\tilde{x}_{k+1}),\Delta^\phi_k\big\rangle
+\frac{L_\phi}{2}\|\Delta^\phi_k\|_2^2.
\end{align}
Taking conditional expectation given $\mathcal{F}_k$ and using \eqref{eq:unbiased_grad_const2} yields
\begin{align}
\label{eq:enc_descent_2_const2}
\mathbb{E}\!\left[\mathcal{J}(x_{k+1})\mid \mathcal{F}_k\right]
&\le \mathbb{E}\!\left[\mathcal{J}(\tilde{x}_{k+1})\mid \mathcal{F}_k\right]
-\beta\|\nabla_{\phi}\mathcal{J}(\tilde{x}_{k+1})\|_2^2
+\frac{L_\phi\beta^2}{2}G_\phi^2.
\end{align}

\paragraph{(III) Combine.}
Combining \eqref{eq:critic_descent_2_const2} and \eqref{eq:enc_descent_2_const2} gives
\begin{align}
\label{eq:one_step_decrease_expanded_const2}
\mathbb{E}\!\left[\mathcal{J}(x_{k+1})\mid \mathcal{F}_k\right]
&\le \mathcal{J}(x_k)
-\alpha\Big(\|\nabla_{\theta_r}\mathcal{J}(x_k)\|_2^2+\|\nabla_{\theta_c}\mathcal{J}(x_k)\|_2^2\Big)
-\beta\|\nabla_{\phi}\mathcal{J}(\tilde{x}_{k+1})\|_2^2 \nonumber\\
&\quad +\frac{L_\theta\alpha^2}{2}(G_r^2+G_c^2)+\frac{L_\phi\beta^2}{2}G_\phi^2.
\end{align}
Define the constant
\begin{equation}
\label{eq:C_def_const2}
C :=
\max\Big\{\tfrac{L_\theta}{2}(G_r^2+G_c^2),\ \tfrac{L_\phi}{2}G_\phi^2\Big\}<\infty.
\end{equation}
Then \eqref{eq:one_step_decrease_expanded_const2} implies the compact bound
\begin{align}
\label{eq:one_step_decrease_compact_const2}
\mathbb{E}\!\left[\mathcal{J}(x_{k+1})\mid \mathcal{F}_k\right]
\le \mathcal{J}(x_k)
-\alpha\Big(\|\nabla_{\theta_r}\mathcal{J}(x_k)\|_2^2+\|\nabla_{\theta_c}\mathcal{J}(x_k)\|_2^2\Big)
-\beta\|\nabla_{\phi}\mathcal{J}(\tilde{x}_{k+1})\|_2^2
+ C(\alpha^2+\beta^2).
\end{align}

\paragraph{(IV) Telescope and take $\limsup$.}
Take total expectation and use the tower property to obtain
\begin{align}
\label{eq:uncond_one_step_const2}
\mathbb{E}\!\left[\mathcal{J}(x_{k+1})\right]
\le \mathbb{E}\!\left[\mathcal{J}(x_k)\right]
-\alpha\,\mathbb{E}\!\left[\|\nabla_{\theta_r}\mathcal{J}(x_k)\|_2^2+\|\nabla_{\theta_c}\mathcal{J}(x_k)\|_2^2\right]
-\beta\,\mathbb{E}\!\left[\|\nabla_{\phi}\mathcal{J}(\tilde{x}_{k+1})\|_2^2\right]
+ C(\alpha^2+\beta^2).
\end{align}
Summing \eqref{eq:uncond_one_step_const2} from $k=0$ to $K-1$ yields
\begin{align}
\label{eq:telescope_bound_const2}
\alpha\sum_{k=0}^{K-1}\mathbb{E}\!\left[\|\nabla_{\theta_r}\mathcal{J}(x_k)\|_2^2+\|\nabla_{\theta_c}\mathcal{J}(x_k)\|_2^2\right]
+\beta\sum_{k=0}^{K-1}\mathbb{E}\!\left[\|\nabla_{\phi}\mathcal{J}(\tilde{x}_{k+1})\|_2^2\right]
\le
\mathbb{E}\!\left[\mathcal{J}(x_0)\right]-\mathbb{E}\!\left[\mathcal{J}(x_K)\right]
+ C K(\alpha^2+\beta^2).
\end{align}
Using the lower bound $\mathcal{J}(x)\ge \mathcal{J}_{\inf}$ and dividing by $K$ gives
\begin{align}
\label{eq:avg_bound_const2}
\frac{\alpha}{K}\sum_{k=0}^{K-1}\mathbb{E}\!\left[\|\nabla_{\theta_r}\mathcal{J}(x_k)\|_2^2+\|\nabla_{\theta_c}\mathcal{J}(x_k)\|_2^2\right]
+\frac{\beta}{K}\sum_{k=0}^{K-1}\mathbb{E}\!\left[\|\nabla_{\phi}\mathcal{J}(\tilde{x}_{k+1})\|_2^2\right]
\le
\frac{\mathbb{E}[\mathcal{J}(x_0)]-\mathcal{J}_{\inf}}{K}
+ C(\alpha^2+\beta^2).
\end{align}
Let $K\to\infty$ and use $\alpha,\beta\le \max\{\alpha,\beta\}$ to obtain
\begin{equation}
\label{eq:avg_grad_limsup_final_const2}
\limsup_{K\to\infty}\frac{1}{K}\sum_{k=0}^{K-1}
\mathbb{E}\!\left[\|\nabla_{\theta_r}\mathcal{J}(x_k)\|_2^2+\|\nabla_{\theta_c}\mathcal{J}(x_k)\|_2^2+\|\nabla_{\phi}\mathcal{J}(\tilde{x}_{k+1})\|_2^2\right]
\le
O\!\left(\max\{\alpha,\beta\}\right).
\end{equation}
Finally, since
\[
\left\|\nabla \mathcal{J}(\theta_{r,k},\theta_{c,k},\phi_k)\right\|_2^2
=
\|\nabla_{\theta_r}\mathcal{J}(x_k)\|_2^2+\|\nabla_{\theta_c}\mathcal{J}(x_k)\|_2^2+\|\nabla_{\phi}\mathcal{J}(x_k)\|_2^2,
\]
and $\|\nabla_{\phi}\mathcal{J}(x_k)\|_2^2$ differs from
$\|\nabla_{\phi}\mathcal{J}(\tilde{x}_{k+1})\|_2^2$ only by a higher-order term
controlled by smoothness and the $O(\alpha)$ critic step, we conclude \eqref{eq:avg_grad_limsup_Omax}.
\end{proof}

\subsection{Proof of Proposition \ref{prop:aliasing_lb_short} (Irreducible regret under task aliasing without $\bm{z}$)}
\begin{proof}
Let $p:=\pi(\bm{a}_1|\bar{\bm{s}})$, so $\pi(\bm{a}_2|\bar{\bm{s}})=1-p$.
In $\xi_1$, the expected regret at $\bar{\bm{s}}$ is at least
\begin{equation}
\underbrace{(1-p)}_{\text{wrong prob.}}
\underbrace{\Big(Q_r^{\xi_1}(\bar{\bm{s}},\bm{a}_1)-Q_r^{\xi_1}(\bar{\bm{s}},\bm{a}_2)\Big)}_{\text{wrong loss}}
\ \ge\ (1-p)\delta.
\end{equation}
In $\xi_2$, the expected regret at $\bar{\bm{s}}$ is at least
\begin{equation}
\underbrace{p}_{\text{wrong prob.}}
\underbrace{\Big(Q_r^{\xi_2}(\bar{\bm{s}},\bm{a}_2)-Q_r^{\xi_2}(\bar{\bm{s}},\bm{a}_1)\Big)}_{\text{wrong loss}}
\ \ge\ p\delta.
\end{equation}
Therefore, $\max\{p\delta,(1-p)\delta\}\ge \delta/2$, meaning that at least one of
$\xi_1$ or $\xi_2$ incurs regret $\ge \delta/2$ at $\bar{\bm{s}}$, independent of data size.
\end{proof}

\subsection{Proof of Theorem \ref{theo_upper_bound} (\textit{Sim2Real} safety upper-bound under risk-neutral deployment)}
\begin{proof}
\label{proof_upper_bound}
By Lipschitz continuity (Assumption \ref{ass_Lipschitz}), for any policy $\pi$, 
\begin{equation}
    |J_{c}(\pi|\bm{z}_1) - J_{c}(\pi|\bm{z}_2)| \leq L_{c}||\bm{z}_1 - \bm{z}_2||_{2}.
\end{equation}
With risk-neutral policy $\pi_{exp}^{\varphi}$, 
\begin{equation}
    |J_{c}(\pi_{exp}^{\varphi}|\bm{z}^{*}) - J_{c}(\pi_{exp}^{\varphi}|\hat{\bm{z}})| \leq L_{c}||\bm{z}^{*} - \hat{\bm{z}}||_{2}.
\end{equation}
Then, 
\begin{equation}
    J_{c}(\pi_{exp}^{\varphi}|\hat{\bm{z}})
    \leq
    J_{c}(\pi_{exp}^{\varphi}|\bm{z}^{*})  
    + L_{c}||\bm{z}^{*} - \hat{\bm{z}}||_{2}.
\end{equation}
By training safety (Assumption \ref{ass_training_safety}) and real environment characteristic (Assumption \ref{ass_gap}), for all latent including $\bm{z}^{*}$, we have 
\begin{equation}
 J_{c}(\pi_{exp}^{\varphi}|\bm{z}^{*}) \leq d.
\end{equation}
Thus,
\begin{equation}
 J_{c}(\pi_{exp}^{\varphi}|\hat{\bm{z}}) \leq d + L_{c}||\bm{z}^{*} - \hat{\bm{z}}||_{2}.
\end{equation}
Taking expectation over the randomness in $\hat{\bm{z}}$,
\begin{equation}
J_{c}(\pi_{exp}^{\varphi}|\hat{\bm{z}}) \leq d + L_{c} \epsilon (N_{real}).
\end{equation}

\end{proof}

\subsection{Proof of Theorem \ref{theo_Conditional} (Conditional \textit{Sim2Real} safety under risk-sensitive deployment)}

\begin{proof}
    Like proof \ref{proof_upper_bound}, the deployed policy satisfies:
\begin{equation}
    | J_{c}(\pi_{\eta}^{\varphi}|\bm{z}^{*}) - J_{c}(\pi_{\eta}^{\varphi}|\hat{\bm{z}})  |\leq L_{c}||\bm{z}^{*} - \hat{\bm{z}}||_{2}.
\end{equation}
By definition of Eq. (\ref{eq_delta_def}),
\begin{equation}
    J_{c}(\pi_{\eta}^{\varphi}|\hat{\bm{z}}) \leq J_{c}(\pi_{exp}^{\varphi}|{\bm{z}^{*}}) - \Delta(\eta, \hat{\bm{z}}) + L_{c}||\bm{z}^{*} - \hat{\bm{z}}||_{2}.
\end{equation}
Using upper-bound in Assumption \ref{ass_training_safety},
\begin{equation}
    J_{c}(\pi_{\eta}^{\varphi}|\hat{\bm{z}}) \leq d - \Delta(\eta, \hat{\bm{z}}) + L_{c}||\bm{z}^{*} - \hat{\bm{z}}||_{2}.
\end{equation}
Therefore, if $\Delta(\eta, \bm{z}) \geq L_{c} \epsilon (N_{real}) = L_{c} \mathbb{E} ||\bm{z}^{*} - \hat{\bm{z}}||_{2}$, then the RHS is at most $d$, which yields the claimed safety condition.
    
\end{proof}

\subsection{Proof of Lemma~\ref{lemma:monotone_tail} (Monotonicity of upper-tail cost value)}
\begin{proof}
Fix $(s,a,z)$ and denote $f(\tau):=Z^{\theta_c}(s,a,z;\tau)$.
The quantile function $f(\tau)$ is non-decreasing in $\tau$.
For $\eta\in(0,1)$, let $\tau=\eta+(1-\eta)u$ with $u\sim U([0,1])$.
Then
\[
Q_c^\eta(s,a,z)
=\mathbb E_{u\sim  U([0,1])}\!\left[f\!\left(\eta+(1-\eta)u\right)\right].
\]
For any $0<\eta_1\le \eta_2<1$ and any $u\in[0,1]$,
\[
\eta_1+(1-\eta_1)u \le \eta_2+(1-\eta_2)u,
\]
since their difference equals $(\eta_2-\eta_1)(1-u)\ge 0$.
By the monotonicity of $f$, we have
$f(\eta_1+(1-\eta_1)u)\le f(\eta_2+(1-\eta_2)u)$ for all $u$.
Taking expectation over $u$ yields
$Q_c^{\eta_1}(s,a,z)\le Q_c^{\eta_2}(s,a,z)$.
\end{proof}

\subsection{Proof of Theorem \ref{theo:eta_feasible} (Existence of a safe deployment quantile)}

\begin{proof}

Fix any quantile $\eta \in (0,1)$, $\Delta_{c}({\eta}) \;\ge\; 0$. Since $\epsilon(N)\to 0$ as $N\to\infty$, by the $\varepsilon$--$N$ definition of limits in $\mathbb{R}$, for the
positive constant
\begin{equation}
    \varepsilon
    := \frac{\Delta_{c}({\eta})}{L_{c}},
\end{equation}
there exists an integer $N_{\min}\ge 0$ such that for all
$N_{real} > N_{\min}$,
\begin{equation}
    \big|\epsilon(N_{real}) - 0\big|
    \;<\; \varepsilon
    \;=\; \frac{\Delta_{c}({\eta})}{L_{c}}.
\end{equation}
Multiplying both sides by $L_{c}>0$ yields
\begin{equation}
    L_{c}\,
    \epsilon(N_{real})
    \;<\;
    \Delta_{c}({\eta}),
    \quad\forall N_{real} > N_{\min}.
\end{equation}

Now define the deployment quantile for all $N_{real}>N_{\min}$ as
\begin{equation}
    \eta^{(N_{real})} := {\eta}.
\end{equation}
Therefore, 
\begin{equation}
    \Delta_{c}(\eta^{(N_{real})})
    \;=\;
    \Delta_{c}({\eta})
    \;>\;
    L_{c}\,
    \epsilon(N_{real}),
\end{equation}
which proves that for every $N_{real}>N_{\min}$ there exists a feasible
deployment quantile $\eta^{(N_{real})}\in(0,1)$ satisfying the desired inequality.
\end{proof}

\section{Lagrangian Training Process}

Let $\hat{J}_c$ be the estimated discounted cost under the current policy. The constraint violation is defined as:
\begin{equation}
e = \hat{J}_{c} (\pi) - d .
\end{equation}

The PID controller generates an update signal:
\begin{equation}
\Delta \lambda_{L}
= K_{p} e
+ K_{i} \sum_{j} e_j
+ K_{d} (e - e_{\text{prev}}),
\end{equation}
where $K_{p}$, $K_{i}$, and $K_{d}$ are proportional, integral and derivative gains. Finally, the multiplier is updated with non-negative projection:
\begin{equation}
\lambda_{L} \leftarrow \max\big(0, \lambda_{L} + \Delta \lambda_{L}\big).
\end{equation}

This adaptive update increases $\lambda_{L} $ when the constraint is violated and decreases it when the policy becomes overly conservative, thus stabilizing the Lagrangian optimization.

\section{Risk-Aware Conservative Action Generation at Deployment}
\label{app:conservative_action}

To construct the risk-sensitive deployment policy $\pi_{\eta}^{\varphi}$, a lightweight refinement of the nominal actor output is conducted using the learned IQN critics, without updating any network parameters. The method is summarized in Algorithm \ref{alg:proj_refine}.

Specifically, the upper-tail cost value is
\begin{equation}
Q_c^{\eta}(s,a,z)
:= \mathbb{E}_{\tau \sim U([\eta,1])}
\big[ Z^{\theta_c}(s,a,z;\tau) \big], \quad \eta \in (0,1).
\label{eq:Qc_eta_def}
\end{equation}

Let the nominal action produced by the trained actor be
\begin{equation}
a_0 = \pi_{exp}^{\varphi}(s,z).
\end{equation}
At deployment, $a_0$ is refined to obtain a conservative action that satisfies a risk-aware cost constraint while remaining close to the nominal behavior. Specifically, we solve a local constrained problem:
\begin{equation}
\begin{aligned}
\max_{a \in \mathcal A} \quad
& Q_r(s,a,z) \;-\; \beta \|a-a_0\|_2^2 \\
\text{s.t.} \quad
& Q_c^{\eta}(s,a,z) \le d,
\end{aligned}
\label{eq:local_constrained_refinement}
\end{equation}
where $Q_r(s,a,z):=\mathbb{E}_{\tau \sim U([0,1])}[Z_r^{\theta_r}(s,a,z;\tau)]$
is the risk-neutral reward value, and $\beta_{n}>0$ penalizes excessive deviation from the nominal action.

Problem~\eqref{eq:local_constrained_refinement} is approximately solved by a
projected gradient refinement initialized at $a_0$.
At iteration $k$, we first take a reward-ascent step with proximal regularization,
\begin{equation}
\tilde a_{k+1}
=
\Pi_{\mathcal A}
\Big(
a_k + \alpha_{r} \big(\nabla_a Q_r(s,a_k,z) - 2\beta_{n}(a_k-a_0)\big)
\Big),
\label{eq:reward_step}
\end{equation}
and then apply a risk-aware projection step to reduce the upper-tail cost value:
\begin{equation}
a_{k+1}
=
\Pi_{\mathcal A}
\Big(
\tilde a_{k+1} - \alpha_{c} \nabla_a Q_c^{\eta}(s,\tilde a_{k+1},z)
\Big),
\label{eq:cost_projection_step}
\end{equation}
where $\Pi_{\mathcal A}$ denotes Euclidean projection onto $\mathcal A$,
and $\alpha_{r},\alpha_{c}>0$ are step sizes.
The refinement terminates once $Q_c^{\eta}(s,a_k,z)\le d$ or after a fixed number
of iterations $K_{\text{ref}}$.
If the nominal action already satisfies the constraint, i.e.,
$Q_c^{\eta}(s,a_0,z)\le d$, we directly execute $a_0$.

Let $a_{\mathrm{ref}}$ denote the refined action after the projected updates
(either after $K_{\text{ref}}$ steps or early stop).
We apply a refinement weight
\[
w(\eta) = \mathrm{clip}(\eta,0,1),
\]
and execute the interpolated action
\[
a_{\eta} = a_0 + w(\eta)\,(a_{\mathrm{ref}} - a_0).
\]
This ensures $\eta=0$ yields the nominal action $a_0$ (risk-neutral),
while $\eta=1$ yields the fully refined action.

The resulting deployment action defines the risk-sensitive policy
$\pi_{\eta}^{\varphi}$:
\begin{equation}
\pi_{\eta}^{\varphi}(s,z) := a_{\eta}.
\end{equation}

\begin{remark}
    The refinement is performed only at deployment and does not modify the actor or critic parameters. The proximal term $\beta\|a-a_0\|_2^2$ helps prevent the action from drifting into out-of-distribution regions where critic gradients may be less reliable. In implementation, both $Q_r$ and $Q_c^\eta$ are estimated by averaging over a small set of quantile samples, and the gradients are computed by automatic differentiation.
\end{remark}

\begin{algorithm}[t]
\caption{Risk-Aware Projected Action Refinement at Deployment}
\label{alg:proj_refine}
\begin{algorithmic}[1]
\Require State $s$, latent $z$, nominal actor $\pi_{exp}^{\varphi}$
\Require Reward critic $Z^{\theta_r}$, cost critic $Z^{\theta_c}$, risk level $\eta$
\Require Step sizes $\alpha_{r},\alpha_{c}$, proximal weight $\beta_{n}$
\Ensure Conservative action $a_{\eta}$

\State Initialize $a_0 \gets \pi_{exp}^{\varphi}(s,z)$
\State $a \gets a_0$

\For{$k = 1$ to $K_{\mathrm{ref}}$}
    \State Evaluate reward value $Q_r(s,a,z)$ using uniform quantiles
    \State Evaluate upper-tail cost $Q_c^{\eta}(s,a,z)$ using $\tau \sim U([\eta,1])$
    \If{$Q_c^{\eta}(s,a,z) \le d$}
        \State \textbf{break}
    \EndIf
    \State Reward-oriented update:
    \State $\tilde a \gets \Pi_{\mathcal A}\!\left(a
    + \alpha_r\big(\nabla_a Q_r(s,a,z) - 2\beta_n(a-a_0)\big)\right)$
    \State Cost-aware projection:
    \State $a \gets \Pi_{\mathcal A}\!\left(\tilde a
    - \alpha_c \nabla_a Q_c^{\eta}(s,\tilde a,z)\right)$
\EndFor
\State $a_{\eta} \gets a_0 + w(\eta)\,(a - a_0)$
\State \Return $a_{\eta}$
\end{algorithmic}
\end{algorithm}

\section{Offline Simulation-based Calibration Method}
\label{appendix_offline}

An offline simulation-based calibration method is introduced to estimate the latent sensitivity and the risk-sensitive return reduction.

\paragraph{Latent sensitivity for $\epsilon$.}
The true latent context $\bm{z}^{*}$ is never observable directly in simulations. However, $\bm{z}^{*}$ can be approximated using a large number of offline collected context. Specifically, for each simulated environment $\xi \sim \mu_{env}$, a set $D_{\xi}^{(N_{ref})}$ containing sufficient large context $N_{ref}$ is adopted, and then the reference latent $\hat{\bm{z}}^{ref}_{\xi}$ is estimated based on the contexts, which becomes a substitute for $\bm{z}^{*}$. 
\begin{equation}
    \hat{\bm{z}}^{ref}_{\xi}
    = \mathbb{E}_{q_{\phi}(\bm{z}|D_{\xi}^{(N_{ref})})}[\bm{z}] \approx \bm{z}^{*}.
    \label{eq_z_ref}
\end{equation}
To find the relationship between the number of context $N$ and encoder error $\epsilon$, a Monte Carlo method is used. Specifically, $N$ contexts are sampled to form
$D_{\xi}^{(N)}$ and $\hat{\bm{z}}^{(N)}_{\xi}$ is computed using Eq. (\ref{eq_z}). The per-environment latent estimation error is then defined as
\begin{equation}
    \hat{\epsilon}_{\xi}(N)
    := \big\|\hat{\bm{z}}^{ref}_{\xi}
      - \hat{\bm{z}}^{(N)}_{\xi}\big\|_{2}.
\end{equation}

Aggregating over the domain-randomization ensemble 
$\xi \sim \mu_{env}$, an empirical distribution of
$\hat{\epsilon}_{\xi}(N)$ is obtained. The offline encoder error function
$\epsilon_{\text{sim}}(N)$ is approximated by a high quantile
of this distribution:
\begin{equation}
    \epsilon_{\text{sim}}(N)
    \approx \textnormal{Quantile}_{q_{\epsilon}}
    \big(\{\hat{\epsilon}_{\xi}(N)\}_{\xi \sim \mu_{env}}\big),
\end{equation}
where $q_{\epsilon} $ controls the conservativeness.

\paragraph{Latent sensitivity for $L_c$.}
Similarly, to estimate the Lipschitz constant $L_{c}$, the cost value functions under the reference
latent and the $N$ context latent are:
\begin{equation}
    J_{c}^{ref}(\xi)
    := J_{c}(\pi^{\varphi}|\hat{\bm{z}}^{ref}_{\xi}), \quad
    J_{c}^{(N)}(\xi)
    := J_{c}(\pi^{\varphi}|\hat{\bm{z}}^{(N)}_{\xi}).
\end{equation}
The corresponding cost difference is
\begin{equation}
    \Delta J_{c,\xi}(N)
    := \big|J_{c}^{ref}(\xi) - J_{c}^{(N)}(\xi)\big|.
\end{equation}
As $\hat{\epsilon}_{\xi}(N) > 0$, an empirical Lipschitz ratio
can be computed as
\begin{equation}
    \hat{L}_{c,\xi}(N)
    := \frac{\Delta J_{c,\xi}(N)}{\hat{\epsilon}_{\xi}(N)}.
\end{equation}
Collecting $\hat{L}_{c,\xi}(N)$ over environments $\xi \sim \mu_{env}$, a boot-strapping set is obtained. $L_{c,\text{sim}}(N)$ could be a conservative estimation with $q_{L} $ at a high quantile.
\begin{equation}
    L_{c,\text{sim}}(N)
    \approx \textnormal{Quantile}_{q_{L}}
    \big(\{\hat{L}_{c,\xi}(N)\}_{\xi \sim \mu_{env}}\big),
\end{equation}

\paragraph{Return reduction estimation.}
To investigate the relationship between $\eta$ and value function reduction terms $\Delta(\eta,\bm{z})$, similar offline calibration is conducted. For each simulated
environment $\mathcal{M}_{\xi^*}$ with reference latent $\hat{\bm{z}}^{ref}_{\xi}$,
both the risk-neutral policy
$\pi_{exp}^{\varphi}$
and the risk-sensitive policy $\pi_{\eta}^{\varphi}$ with a given
quantile parameter $\eta$ are evaluated. Then, the empirical cost reduction for environment $\mathcal{M}_{\xi^*}$ is:

\begin{equation}
    \Delta_{c,\xi}(\eta | N)
    := J_{c}(\pi_{exp}^{\varphi}|\hat{\bm{z}}^{(N)}_{\xi}) - J_{c}(\pi_{\eta}^{\varphi}|\hat{\bm{z}}^{(N)}_{\xi}).
\end{equation}
Aggregating over $\xi \sim \mu_{env}$, a conservative lower bound is defined:
\begin{equation}
    \Delta_{c,sim}(\eta | N)
    \approx \textnormal{Quantile}_{q_{\Delta_{c}}}
    \big(\{    \Delta_{c,\xi}(\eta | N)\}_{\xi \sim \mu_{env}}\big),
\end{equation}
where $q_{\Delta_{c}} $ is a low quantile.

\begin{remark}
    The conservative estimation of $\epsilon_{\text{sim}}(N)$, $L_{c,\text{sim}}(N)$ and $\Delta_{c,sim}(\eta | N)$ contributes to a safe application to the unseen real environment $\xi^{*}$, ensuring $\Delta_{c,sim}(\eta | N) \geq L_{c,\text{sim}}(N)\epsilon_{\text{sim}}(N)$. Intuitively, for $q_{\text{RHS}}:=\min\{ q_{\epsilon}, q_{L}\}$ percent random environments for the RHS and $1-q_\Delta$ percent for the LHS, given a $N$, there will be a $\eta$ that satisfies the safety condition in Theorem \ref{theo_Conditional}. In practice, to simplify implementation without loss of generality, we approximate the expectation using the median quantile. 
\end{remark}

\section{Experimental Settings}
\label{appendix_experiment}

\subsection{\textsc{PointGoal2} Task}

We evaluate our method in standard environments from the OpenAI Safety Gym suite \cite{ray2019benchmarking}, following the experimental protocol of \cite{as2025spidr}. In particular, we consider the \textsc{PointGoal2} task, which requires an agent to navigate to a target location while avoiding safety-critical events such as collisions with movable objects and entering hazardous regions.

\begin{figure}[t]
  \begin{center}   \centerline{\includegraphics[width=0.4\columnwidth]{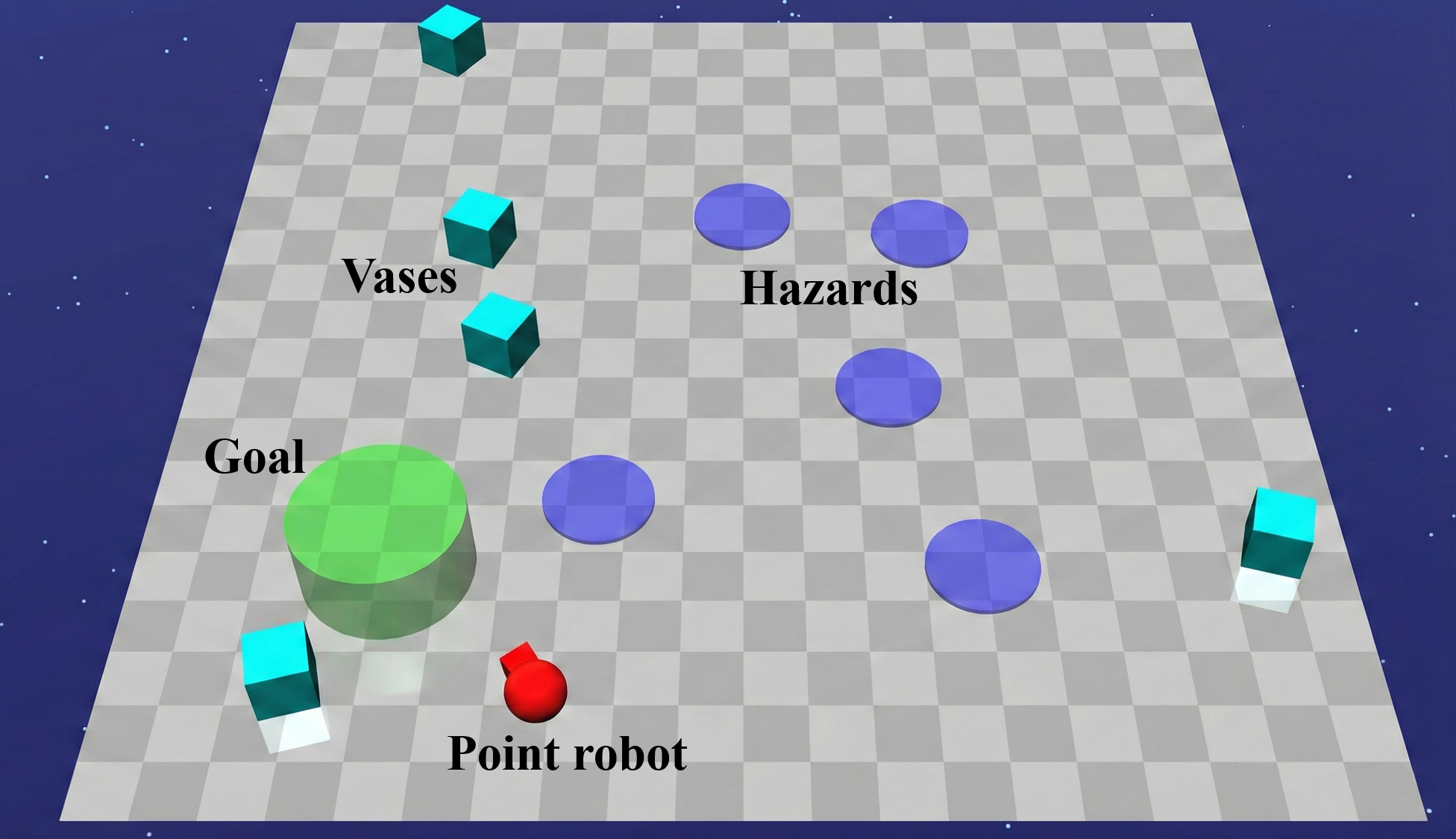}}
    \caption{
      Example of \textsc{PointGoal2} task.
    }
    \label{F_exp}
  \end{center}
\end{figure}

As shown in Figure \ref{F_exp}, the agent is rewarded based on progress toward the goal, measured by the reduction in Euclidean distance between successive time steps, with an additional terminal bonus upon reaching the goal. Safety costs are incurred when the agent collides with obstacles, induces excessive object motion, or enters predefined hazard zones. These reward and cost definitions are identical to those used in \cite{ray2019benchmarking}, ensuring a consistent benchmark for comparison.

\paragraph{\textit{Sim2Real} gap.}
To evaluate the model robustness under the \textit{Sim2Real} gap, we introduce domain randomization over a subset of physical parameters during training and evaluation. The joint damping, actuator gear ratios, and body mass are randomized with different ranges and magnitudes in training and evaluation. Specifically, we consider three levels of OOD shifts, denoted as Mild, Medium, and High, by progressively expanding the variations of the physical parameters, as shown in Table ~\ref{tab:pointgoal2_params_ood}.

\begin{remark}
    Compared to \citet{as2025spidr}, which employs very limited randomization during training, our method adopts substantially wider randomization intervals. This design choice is motivated by the methodological differences.\citet{as2025spidr} incorporates an explicit pessimistic term in the cost objective, which can become overly conservative under large uncertainty sets and hinder learning. In contrast, our approach uses a latent context variable to characterize the environment, allowing us to safely expose the agent to a broader range of dynamics during training. As a result, the learned policy adapts to more diverse environments and exhibits improved robustness.
\end{remark}

\begin{table}[t]
\centering
\caption{Domain randomization parameters and ranges used for training and evaluation under different OOD levels in the \textsc{PointGoal2} environment. $\times$ and $+$ denote multiplicative and additive perturbations, respectively.}
\label{tab:pointgoal2_params_ood}
\begin{tabular}{l c c c c}
\toprule
\multirow{2}{*}{\textbf{Parameter}} &
\multirow{2}{*}{\textbf{Train}} &
\multicolumn{3}{c}{\textbf{Evaluation (OOD Level)}} \\
\cmidrule(lr){3-5}
 &  & \textbf{Mild} & \textbf{Medium} & \textbf{High} \\
\midrule
Damping (x,y) $\times$
& [0.6, 1.0]
& [0.7, 1.0]
& [0.5, 1.0]
& [0.3, 1.3] \\

Damping (z) $\times$
& [0.7, 1.0]
& [0.8, 1.0]
& [0.7, 1.0]
& [0.4, 1.5] \\

Gear (x) $+$
& [0.0, 0.2]
& [0.0, 0.25]
& [0.0, 0.25]
& [-0.2, 0.4] \\

Gear (z) $+$
& [0.0, 0.1]
& [0.0, 0.05]
& [0.0, 0.1]
& [-0.2, 0.3] \\

Mass $\times$
& [0.5, 1.5]
& [0.8, 2.0]
& [0.5, 2.0]
& [0.4, 2.0] \\
\bottomrule
\end{tabular}
\end{table}

\subsection{Autonomous Driving Task for Oscillation Reduction}

\begin{figure}[t]
  \begin{center}   \centerline{\includegraphics[width=0.7\columnwidth]{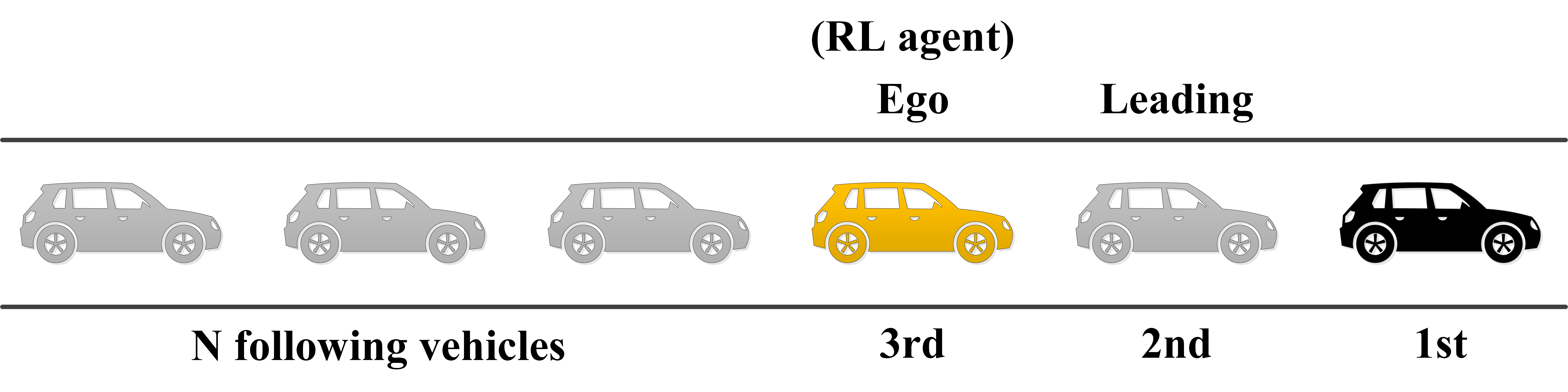}}
    \caption{
      The basic setup of the vehicle platoon.
    }
    \label{F_platoon}
  \end{center}
\end{figure}

Traffic oscillations are a common phenomenon in traffic flow, where small perturbations introduced by a leading vehicle can be amplified and propagated upstream. Such oscillations lead to excessive fuel consumption, increased emissions, a higher risk of rear-end collisions, and reduced traffic efficiency. With the development of autonomous driving technologies, Automated Vehicles (AVs) can potentially serve as traffic regulators by adjusting their longitudinal behavior to damp these oscillations.

To study this problem, we consider a vehicle platoon scenario \cite{jiang2022reinforcement}. The scenario setup is illustrated in Figure \ref{F_platoon}. The trajectory of the first vehicle is specified using real-world trajectories extracted from EPA standard test cycles driving data. The second vehicle is a human-driven vehicle (HDV) modeled by a classical car-following model (e.g., the FVD model) \cite{jiang2001full}. The third vehicle is controlled by the proposed RL agent. The remaining $N$ following vehicles are all modeled as HDVs using the same car-following model. This setup allows us to evaluate whether the RL-controlled vehicle can mitigate the propagation of traffic oscillations while interacting with surrounding HDVs.

\paragraph{\textit{Sim2Real} gap.}
All experiments are conducted in the MoJoCo platform. To capture domain shifts induced by changes in vehicle and road dynamics, we model key physics parameters including vehicle mass, aerodynamic resistances, engine torque and inertia, breaking forces, and tire-road frictions to better represent real-world system dynamics. All the physical parameters are randomized in the deployment stage to demonstrate the \textit{Sim2Real} gap. As shown in Table~\ref{tab:carla_params_ood}, each parameter is initialized from its default value and then perturbed through multiplicative randomization. In particular, vehicle mass $m$ affects acceleration and braking capability; the drag coefficient $C_d$ determines the level of aerodynamic resistance, especially at higher speeds; the engine torque scaling factor $\alpha_{\mathrm{drive}}$ is used to vary engine output; the engine inertia $I_{\mathrm{engine}}$ controls how quickly the engine responds to throttle inputs; and the tire friction coefficient $\mu$ reflects the level of tire-road grip under different surface conditions. Together, these parameters define a practical domain randomization space for evaluating the robustness of the proposed longitudinal controller under varying vehicle and road conditions.

The settings of the state, action, reward, and cost are shown as follows.

\begin{table}[t]
\centering
\caption{Physics randomization ranges used for training and deployment evaluation in the MuJoCo platoon environment. All values are multiplicative scale factors ($\times$) applied to the calibrated defaults: $m{=}1000\,\text{kg}$, $C_d{=}0.3$, drive force${=}5000\,\text{N}$, brake force${=}18000\,\text{N}$, $\tau{=}0.4\,\text{s}$, $\mu{=}2.0$.}
\label{tab:carla_params_ood}
\begin{tabular}{lcc}
\toprule
\textbf{Parameter} & \textbf{Train} & \textbf{Deploy} \\
\midrule
Vehicle mass $m$ $\times$
& $[0.9, 1.1]$ & $[0.8, 1.2]$ \\

Drag coefficient $C_d$ $\times$
& $[0.3, 0.5]$ & $[0.3, 0.6]$ \\

Drive torque $\alpha_{\mathrm{drive}}$ $\times$
& $[1.5, 2.0]$ & $[1.5, 2.5]$ \\

Brake torque $\alpha_{\mathrm{brake}}$ $\times$
& $[0.5, 1.0]$ & $[0.2, 0.5]$ \\

Engine inertia $I_{\mathrm{engine}}$ $\times$
& $[1.0, 1.5]$ & $[1.0, 2.0]$ \\

Tire friction $\mu$ $\times$
& $[0.5, 1.0]$ & $[0.3, 0.8]$ \\

\bottomrule
\end{tabular}
\end{table}

{\bf {State:}} The real-time kinematics of the ego, leading and following vehicles are used to formulate the state representation.
\begin{equation}
  \label{eq:state}
{s}\left( {{t}} \right) = \left[ v_{ego}(t), \dot{v}_{ego}(t), \ddot{v}_{ego}(t), v_{lead}(t), \dot{v}_{lead}(t), \ddot{v}_{lead}(t), l_{lead}(t), v_{follow}(t), l_{follow}(t)\right]
\end{equation}
where $v_{ego}(t)$, $\dot{v}_{ego}(t)$, and $\ddot{v}_{ego}(t)$ are the speed, acceleration, and jerk of the ego vehicle, respectively. Similarly, $v_{lead}(t)$, $\dot{v}_{lead}(t)$, and $\ddot{v}_{lead}(t)$ are the variables of the leading vehicle. $l_{lead}(t)$ is the distance between the ego and leading vehicles. Similarly, $v_{follow}(t)$ and $l_{follow}(t)$ are the speed of the first following vehicle and its distance from the ego, respectively. Note that these variables can be directly measured by AV onboard sensors.

{\bf {Action:}} Since we only consider the longitudinal control of the vehicle, the throttle and braking are the control actions, which could be simplified into a continuous variable $u \in [-1, 1]$. $u=1$ represents 100\% throttle and $u=-1$ means 100\% braking.

{\bf {Reward:}} The reward function encourages the ego vehicle to maintain a smooth and stable car-following behavior while dampening stop-and-go oscillations. 
The first goal is to discourage aggressive closing-in behavior when the ego vehicle is already close to the leading vehicle. Specifically, a penalty is imposed when the ego vehicle travels faster than the leading vehicle under a short spacing condition ($l(t)$ is less than a threshold $l_c$):
\begin{equation}
  \label{eq:follow_speed}
{r_{v}}\left( {{t}} \right) 
=
- f_{v}\,\mathbf{1}(l(t) < l_c)\,
\max\!\left(v_{ego}(t)-v_{lead}(t),\,0\right)^2.
\end{equation}

Second, to maintain efficient and reasonable car-following behavior, the ego vehicle is encouraged to maintain high speeds:
\begin{equation}
  \label{eq:high_speed}
{r_{s}}\left( {{t}} \right) = f_{s} \cdot { \min (v_{max}, v_{ego}) }^2,
\end{equation}
where $v_{max}$ is the speed limit, and ${f_v}$ and ${f_s}$ are the weights of the components.

The third goal is to reduce speed variation and dampen the stop-and-go traffic. 
\begin{equation}
  \label{eq:a}
{r_a}\left( {{t}} \right) =  - {f_a} \cdot {\dot{v}_{ego}(t)}^2,
\end{equation}
\begin{equation}
  \label{eqjerk}
{r_j}\left( {{t}} \right) =  - {f_j} \cdot {\ddot{v}_{ego}(t)}^2,
\end{equation}
where ${f_a}$ and ${f_j}$ are the weights of the components.

Combining these terms, the overall vehicle agent reward is expressed as,
\begin{equation}
  \label{eq:reward}
{r}\left( {{t}} \right) = {r_v}\left( {{t}} \right) + {r_s}\left( {{t}} \right) + {r_a}\left( {{t}} \right) + {r_j}\left( {{t}} \right).
\end{equation}

{\bf {Cost:}} Frequent speed changes of the leading vehicle may cause unsafe car following, leading to potential rear-end collision hazard. Therefore, we use inverse time-to-collision (iTTC), which is a well recognized surrogate measure for safety. Larger iTTCs indicate high risks of rear-end collisions. The iTTC between the ego vehicle and its leading vehicle can be calculated as,
\begin{equation}
\label{eq:cost}
c(t) = \max {\left[
\frac{\max\left(v_{ego}(t) - v_{lead}(t), 0\right)}{l_{lead}(t)},
\frac{\max\left(v_{follow}(t) - v_{ego}(t), 0\right)}{l_{follow}(t)}
\right]} .
\end{equation}

\section{Supplementary Experimental Results}
\label{app_sup_exp}

\subsection{Training Stability and Encoder Behavior}

\begin{figure}[t]
  \centering
  \begin{subfigure}{0.45\columnwidth}
    \centering
    \includegraphics[width=\linewidth]{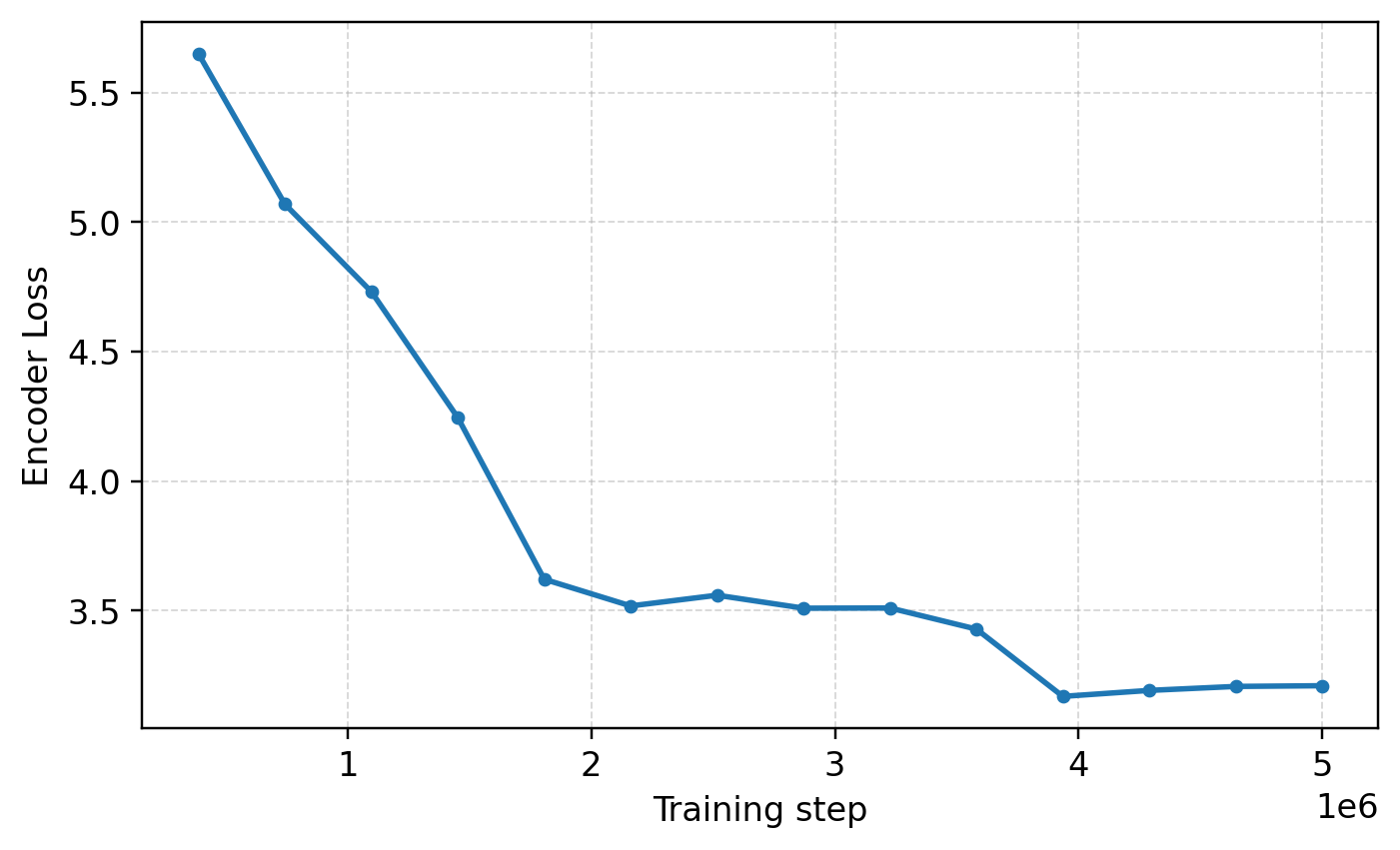}
    \caption{Encoder loss.}
  \end{subfigure}
  \begin{subfigure}{0.45\columnwidth}
    \centering
    \includegraphics[width=\linewidth]{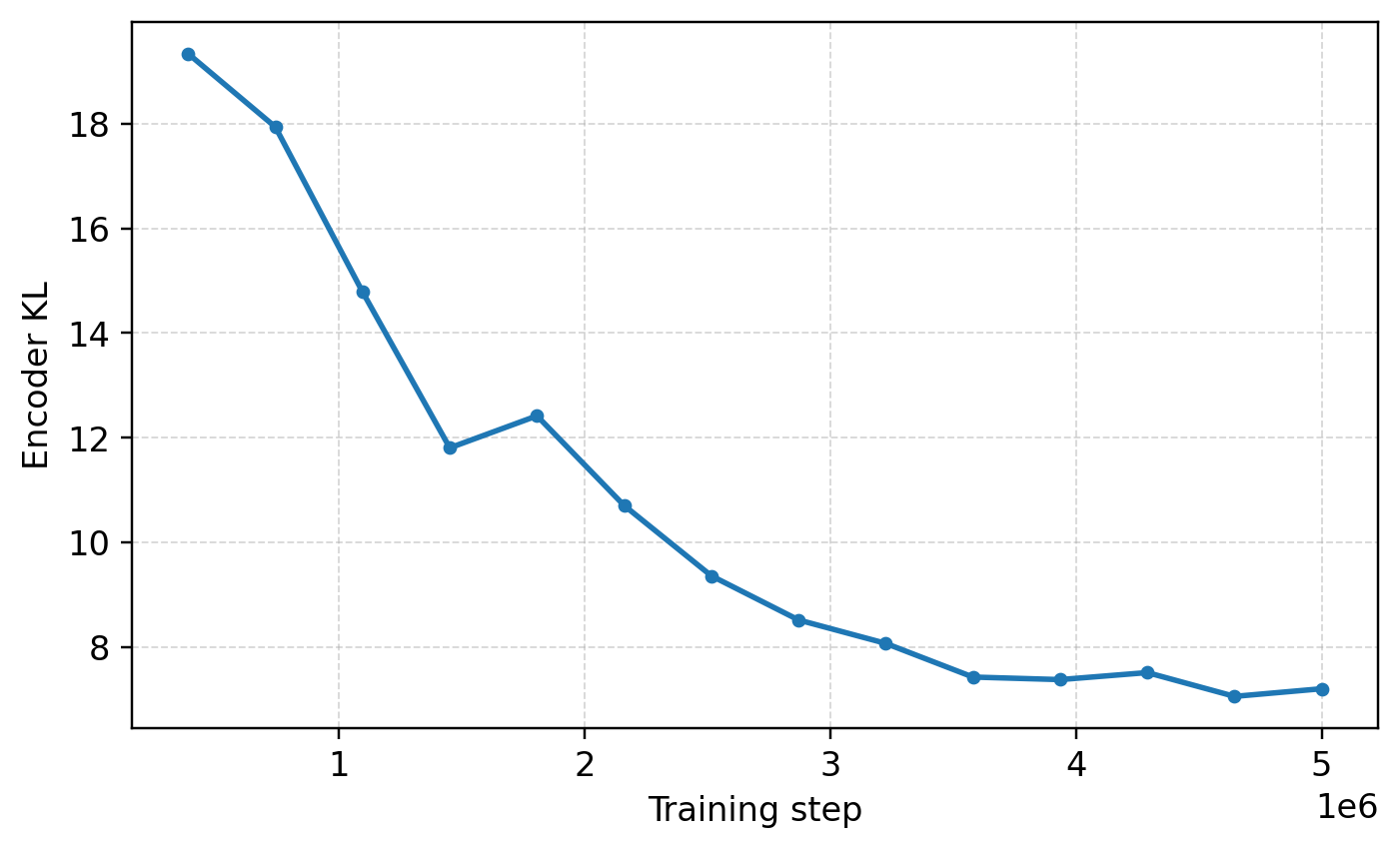}
    \caption{KL divergence.}
  \end{subfigure}

  \caption{Training dynamics of the encoder.
(a) Encoder loss convergence.
(b) KL divergence between the learned posterior and the prior during training.}
  \label{fig:encoder_training}
\end{figure}

Figure~\ref{fig:encoder_training} illustrates the training dynamics of the encoder in the proposed framework for the \textsc{PointGoal2} task.
As shown in Figure~\ref{fig:encoder_training} (a), the encoder loss consistently decreases and converges as training proceeds, indicating stable optimization and empirically supporting the convergence analysis presented in the main text.
This also suggests that the encoder learns increasingly informative representations that are beneficial for downstream policy learning.

Figure~\ref{fig:encoder_training} (b) reports the KL divergence between the encoder posterior and the prior.
After an initial decrease, the KL value stabilizes in later training stages, implying that the learned posterior neither collapses to the prior nor grows unbounded.
This behavior indicates that the encoder successfully captures environment-specific information while maintaining regularization, avoiding overfitting and degenerate latent representations.

Together, these results demonstrate that the encoder is well trained and provides a reliable latent context for policy learning and deployment.

\subsection{Ablation Study}

\begin{table}[t]
    \centering
    \caption{Component comparison of the ablation methods. R.A. denotes risk adaptation.}
    \label{tab:ablation_components}
    \begin{tabular}{lccc}
        \toprule
        Method & Backbone & Latent Encoder & Risk Adaptation \\
        \midrule
        Nominal & IQN & \xmark & \xmark \\
        PEARL & SAC & \cmark & \xmark \\
        Ours w/o R.A. & IQN & \cmark & \xmark \\
        Ours & IQN & \cmark & \cmark \\
        \bottomrule
    \end{tabular}
\end{table}

\begin{table}[t]
    \centering
    \caption{Ablation results under training and deployment in \textsc{PointGoal2} Task. R.A. denotes risk adaptation.}
    \label{tab:ablation_results}
    \begin{tabular}{llcccc}
        \toprule
        & & Nominal & PEARL & Ours w/o R.A. & Ours \\
        \midrule
        \multirow{2}{*}{Train}
            & Reward & $92.98{\pm}2.07$ & $33.11{\pm}4.81$ & -- & $35.17{\pm}3.94$ \\
        \cmidrule(lr){2-6}
            & Cost   & $9.70{\pm}0.92$ & $9.18{\pm}0.40$ & -- & $9.46{\pm}0.51$ \\
        \midrule
        \multirow{2}{*}{Deploy}
            & Reward & $91.43{\pm}7.25$ & $46.09{\pm}5.20$ & $54.21{\pm}8.70$ & $49.59{\pm}8.52$ \\
        \cmidrule(lr){2-6}
            & Cost   & $29.62{\pm}3.73$ & $10.75{\pm}1.91$ & $10.12{\pm}1.53$ & $\textbf{8.86}{\pm}\textbf{0.85}$ \\
        \bottomrule
    \end{tabular}
\end{table}

An incremental ablation study is conducted from the nominal IQN backbone to the complete method for the \textsc{PointGoal2} task. The Nominal baseline uses only the IQN backbone. Ours w/o R.A. adds the latent encoder to IQN, while Ours further introduces the risk adaptation module based on the inferred latent representation. Since risk adaptation in our framework relies on the latent context, removing the latent encoder also removes the basis for risk adaptation, making this variant equivalent to the Nominal baseline. Therefore, a separate ``w/o latent encoder'' baseline is not included. In addition to these internal ablations, we include PEARL \cite{rakelly2019efficient} as a representative latent-context adaptation baseline with a SAC backbone.

The results are shown in Table~\ref{tab:ablation_results}. Nominal achieves high rewards but suffers from severe cost violations during deployment. Both PEARL and Ours w/o R.A. are meta-RL-style ablation methods. They exhibit more conservative behaviors and achieve substantially lower deployment costs. The IQN backbone in Ours w/o R.A. slightly outperforms the SAC backbone used in PEARL. However, the deployment costs of both baselines still exceed the safety threshold of $10$. In contrast, our complete method further reduces the cost below the threshold. This benefit is also shown in Figure~\ref{fig:deploy_metrics}, where the blue curve consistently maintains the cost below the threshold throughout deployment, demonstrating a clear safety advantage over the other baselines.

\subsection{Online Computational Overhead and Complexity Analysis}

We test the running time of Algorithm~1 under different combinations of $K_{\mathrm{ref}}$, $\alpha_r$, $\alpha_c$, and $\beta_n$ for the Autonomous Driving task. In general, the average action-refinement time is below $10$ ms on a desktop equipped with an Intel Core i9-12900KF CPU and an RTX 3090 GPU, which is sufficiently fast for real-time operation under a typical $20$ Hz control resolution. The online computational overhead is summarized in Table~\ref{tab:refinement_time}.

\begin{table}[t]
    \centering
    \caption{Action-refinement time under different hyperparameter combinations.}
    \label{tab:refinement_time}
    \begin{tabular}{ccccc}
        \toprule
        $K_{\mathrm{ref}}$ & $\alpha_r$ & $\alpha_c$ & $\beta_n$ & Time (ms) \\
        \midrule
        5  & 0.01 & 0.05 & 0.0 & $3.51{\pm}0.27$ \\
        5  & 0.10 & 0.05 & 1.0 & $3.54{\pm}0.32$ \\
        5  & 0.01 & 0.30 & 1.0 & $3.55{\pm}0.28$ \\
        5  & 0.10 & 0.30 & 0.0 & $3.56{\pm}0.28$ \\
        15 & 0.01 & 0.05 & 1.0 & $9.18{\pm}0.47$ \\
        15 & 0.10 & 0.05 & 0.0 & $9.19{\pm}0.56$ \\
        15 & 0.01 & 0.30 & 0.0 & $9.17{\pm}0.39$ \\
        15 & 0.10 & 0.30 & 1.0 & $9.19{\pm}0.58$ \\
        \bottomrule
    \end{tabular}
\end{table}

Each refinement step computes $\nabla_a Q_r$ for reward-gradient ascent and $\nabla_a Q_c$ for cost-gradient descent, both of which require forward and backward passes through the IQN critic networks. As shown in Table~\ref{tab:refinement_time}, the running time scales approximately linearly with $K_{\mathrm{ref}}$. In contrast, the step-size parameters $\alpha_r$, $\alpha_c$, and $\beta_n$ do not affect the computation time, since they are scalar multipliers within the same JIT-compiled computational graph.

We also provide a detailed complexity analysis in terms of Big-O notation and FLOP counts for each component of the framework. The results are consistent with the empirical computation time observed in the experiments.

As shown in Table \ref{tab:complexity_notation}, let $d_s$, $d_a$, and $d_z$ denote the observation, action, and meta-latent dimensions, respectively. Let $d_\pi$, $d_Z$, and $d_\phi$ denote the hidden widths of the policy, distributional critic, and meta-encoder, respectively. Let $L_\pi$, $L_Z$, and $L_\phi$ denote their corresponding depths. In addition, $L_B$ is the number of BroNet residual blocks, $d_\tau$ is the cosine embedding dimension, $N_\tau$ is the number of quantile samples, $N_c$ is the number of critics, $M$ is the number of context transitions, and $K_{\mathrm{ref}}$ is the number of refinement steps.

\begin{table}[t]
    \centering
    \caption{Notation and values used in the complexity analysis.}
    \label{tab:complexity_notation}
    \begin{tabular}{llc}
        \toprule
        Symbol & Meaning & Value \\
        \midrule
        $d_s,d_a,d_z$ & Obs./action/latent dimensions & $9,1,5$ \\
        $d_\pi$ ($L_\pi$ layers) & Policy width (depth) & $256$ ($3$) \\
        $d_Z$ ($L_Z$ layers) & Distributional critic feature width (depth) & $512$ ($2$) \\
        $L_B$ & BroNet residual blocks per critic & $2$ \\
        $d_\tau$ & Cosine embedding dimension & $64$ \\
        $N_\tau$ & Quantile samples per forward pass & $32$ \\
        $N_c$ & Number of critics & $2$ \\
        $d_\phi$ ($L_\phi$ layers) & Meta-encoder width (depth) & $256$ ($3$) \\
        $M$ & Number of context transitions & $128$ \\
        $K_{\mathrm{ref}}$ & Refinement steps & $5$ \\
        \bottomrule
    \end{tabular}
\end{table}

A summary of the complexity analysis is shown in Table~\ref{tab:complexity_summary}. The computational complexity is dominated by action refinement, which accounts for over $90\%$ of both FLOPs and wall-clock time. The FLOP magnitudes of different components, approximately $10^5:10^7:10^9$, are generally consistent with the measured computation time, with small deviations caused by GPU parallelism and kernel-launch overhead. The cost scales linearly with $K_{\mathrm{ref}}$, $N_\tau$, $N_c$, $L_B$, and $M$, and quadratically with $d_Z$. The full inference pipeline, from receiving a state observation to outputting a safety-refined action with $K_{\mathrm{ref}}=5$, takes less than $4$ ms, occupying less than $8\%$ of the $50$ ms control timestep. Even with $K_{\mathrm{ref}}=15$ refinement steps, the total inference time remains below $10$ ms, corresponding to about $20\%$ utilization of the control timestep. Given the current complexity level and online computational overhead, the framework is efficient enough for real-time deployment. 

\begin{table*}[t]
    \centering
    \caption{Summary of computational complexity, FLOP counts, and empirical running time.}
    \label{tab:complexity_summary}
    \begin{tabular}{lccc}
        \toprule
        Component & Complexity & FLOPs & Empirical time \\
        \midrule
        Policy $\pi$ 
        & $O(L_\pi d_\pi^2)$ 
        & $\sim 10^5$ 
        & $0.03$ ms \\
        Meta-encoder $q_\phi$ 
        & $O(M L_\phi d_\phi^2)$ 
        & $\sim 10^7$ 
        & $0.08$ ms \\
        $\eta$ computation 
        & $O(1)$ 
        & negligible 
        & $0.14$ ms$^\dagger$ \\
        Action refinement 
        & $O(K_{\mathrm{ref}} N_c N_\tau L_B d_Z^2)$ 
        & $\sim 10^9$ 
        & $3.51$ ms \\
        \bottomrule
    \end{tabular}

    \vspace{0.5em}
    \raggedright
    \footnotesize
    $^\dagger$ The $\eta$ computation has negligible FLOPs; the measured $0.14$ ms mainly reflects fixed GPU kernel-launch overhead rather than arithmetic computation.
\end{table*}

Detailed complexity analysis for each component is provided below.

\paragraph{Policy $\pi(s,z)$.}
The policy is a three-layer MLP with input dimension $d_s+d_z$. Its computational cost is
\begin{equation}
    C_\pi
    =
    (d_s+d_z)d_\pi
    +
    (L_\pi-1)d_\pi^2
    +
    d_\pi d_a
    =
    O(L_\pi d_\pi^2)
    \approx
    1.3 \times 10^5
    \text{ FLOPs}.
\end{equation}

\paragraph{Meta-encoder $q_\phi(s,s',a,r,c)$.}
The meta-encoder is a three-layer MLP applied independently to each of $M$ context transitions. The input dimension is $2d_s+d_a+2=21$. The resulting embeddings are then aggregated using Product-of-Gaussians, whose cost is $O(Md_z)$ and is negligible. The computational cost is
\begin{equation}
    C_\phi
    =
    M
    \left[
    (2d_s+d_a+2)d_\phi
    +
    (L_\phi-1)d_\phi^2
    +
    2d_\phi d_z
    \right]
    =
    O(M L_\phi d_\phi^2)
    \approx
    1.8 \times 10^7
    \text{ FLOPs}.
\end{equation}

\paragraph{Distributional critic forward pass.}
The feature extractor maps $(s,a,z)$ to $\mathbb{R}^{d_Z}$. Then, $N_\tau$ quantile levels are sampled, embedded via cosine basis functions, projected to $\mathbb{R}^{d_Z}$, and element-wise multiplied with the hidden state. Finally, $N_c$ BroNet critics, each with $1+2L_B$ dense layers of width $d_Z$ plus a scalar head, produce the final quantile values. The computational cost is
\begin{equation}
    C_Z
    =
    L_Z d_Z^2
    +
    N_\tau d_\tau d_Z
    +
    N_c N_\tau (1+2L_B)d_Z^2
    =
    O(N_c N_\tau L_B d_Z^2)
    \approx
    8.4 \times 10^7
    \text{ FLOPs}.
\end{equation}

\paragraph{Action refinement.}
Each refinement step requires three operations through the distributional critics: (i) one $Z_c$ forward pass to evaluate $Q_c^\eta$ for early stopping, (ii) one $Z_r$ forward and backward pass to compute $\nabla_a Q_r$, and (iii) one $Z_c$ forward and backward pass to compute $\nabla_a Q_c^\eta$. This gives $3$ forward passes and $2$ backward passes per step. Approximating the backward pass as having the same cost as the forward pass, each refinement step costs $5C_Z$. Therefore,
\begin{equation}
    C_{\mathrm{ref}}
    =
    5K_{\mathrm{ref}}C_Z
    =
    O(K_{\mathrm{ref}}N_cN_\tau L_B d_Z^2)
    \approx
    2.1 \times 10^9
    \text{ FLOPs}.
\end{equation}

\subsection{Video}
We rendered videos for different methods, which could further illustrate the superiority of our method. The video could be found in https://youtu.be/Mn9avTYnMa8.

\clearpage

\end{document}